\def\1{\bm{1}}
\def\va{{\bm{a}}}
\def\vb{{\bm{b}}}
\def\vd{{\bm{d}}}
\def\vg{{\bm{g}}}
\def\vh{{\bm{h}}}
\def\vm{{\bm{m}}}
\def\vo{{\bm{o}}}
\def\vp{{\bm{p}}}
\def\vs{{\bm{s}}}
\def\vx{{\bm{x}}}
\def\vy{{\bm{y}}}
\def\vz{{\bm{z}}}
\DeclareMathAlphabet{\mathsfit}{\encodingdefault}{\sfdefault}{m}{sl}
\SetMathAlphabet{\mathsfit}{bold}{\encodingdefault}{\sfdefault}{bx}{n}
\DeclareMathOperator*{\argmin}{arg\,min}
\DeclareMathOperator{\sign}{sign}
\newcommand{\vA}{{\mathbf{A}}}
\newcommand{\vB}{{\mathbf{B}}}
\newcommand{\vI}{{\mathbf{I}}}
\newcommand{\vJ}{{\mathbf{J}}}
\newcommand{\vM}{{\mathbf{M}}}
\newcommand{\vP}{{\mathbf{P}}}
\newcommand{\vU}{{\mathbf{U}}}
\newcommand{\vV}{{\mathbf{V}}}
\newcommand{\cA}{{\mathcal{A}}}
\newcommand{\cD}{{\mathcal{D}}}
\newcommand{\cF}{{\mathcal{F}}}
\newcommand{\cL}{{\mathcal{L}}}
\newcommand{\cX}{{\mathcal{X}}}
\newcommand{\cZ}{{\mathcal{Z}}}
\newcommand{\prox}{{\mathrm{prox}}}
\newtheorem{theorem}{Theorem}
\newtheorem{definition}{Definition}
\newtheorem{lemma}{Lemma}
\newtheorem{proposition}{Proposition}
\newcommand{\cut}[1]{{}}  
\definecolor{Xiaohan_color}{rgb}{0.098, 0.643, 0.071}
\icmltitlerunning{Towards Constituting Mathematical Structures for Learning to Optimize}
\begin{document}

\twocolumn[
\icmltitle{Towards Constituting Mathematical Structures for Learning to Optimize}


\icmlsetsymbol{equal}{*}

\begin{icmlauthorlist}
\icmlauthor{Jialin Liu}{equal,alibaba}
\icmlauthor{Xiaohan Chen}{equal,alibaba}
\icmlauthor{Zhangyang Wang}{ut}
\icmlauthor{Wotao Yin}{alibaba}
\icmlauthor{HanQin Cai}{ucf}
\end{icmlauthorlist}

\icmlaffiliation{alibaba}{Alibaba Group (U.S.) Inc, Bellevue, WA, USA}
\icmlaffiliation{ut}{Department of Electrical and Computer Engineering, University of Texas at Austin, Austin, TX, USA}
\icmlaffiliation{ucf}{Department of Statistics and Data Science and Department of Computer Science, University of Central Florida, Orlando,
FL, USA}

\icmlcorrespondingauthor{HanQin Cai}{hqcai@ucf.edu}

\icmlkeywords{Learning to Optimize, Algorithm Unrolling}

\vskip 0.3in
]



\printAffiliationsAndNotice{\icmlEqualContribution} 


\begin{abstract}
Learning to Optimize (L2O), a technique that utilizes machine learning to learn an optimization algorithm automatically from data, has gained arising attention in recent years. A generic L2O approach parameterizes the iterative update rule and learns the update direction as a black-box network. While the generic approach is widely applicable, the learned model can overfit and may not generalize well to out-of-distribution test sets. In this paper, we derive the basic mathematical conditions that successful update rules commonly satisfy. Consequently, we propose a novel L2O model with a mathematics-inspired structure that is broadly applicable and generalized well to out-of-distribution problems. Numerical simulations validate our theoretical findings and demonstrate the superior empirical performance of the proposed L2O model.
\end{abstract}

\section{Introduction}
\label{sec:intro}
Solving mathematical problems with the help of artificial intelligence, particularly machine learning techniques, has gained increasing interest recently \citep{davies2021advancing,charton2021linear,polu2022formal,drori2021neural}. Optimization problems, a type of math problem that finds a point with minimal objective function value in a given space, can also be solved with machine learning models~\citep{gregor2010learning,andrychowicz2016learning,chen2021learning,bengio2021machine}. Such technique is coined as \emph{Learning to Optimize (L2O)}.

As an example, we consider an unconstrained optimization problem $\min_{\vx\in\mathbb{R}^n} F(\vx)$ where $F$ is differentiable. A classic algorithm to solve this problem is \emph{gradient descent}:
\[\vx_{k+1} = \vx_{k} - \alpha_{k} \nabla F(\vx_{k}),\quad k = 0,1,2,\cdots,\]
where the estimate of $\vx$ is updated in an iterative manner, $\alpha_{k}>0$ is a positive scalar named as step size, and
the update direction $\alpha_{k} \nabla F(\vx_{k})$ is aligned with the gradient of $F$ at $\vx_{k}$.
Instead of the vanilla gradient descent, \citep{andrychowicz2016learning} 
proposes to parameterize the update rule into a learnable model that suggests the update directions by taking the current estimate and the gradient of $F$ as inputs
\begin{equation}
    \label{eq:l2o}
   \vx_{k+1} = \vx_{k} - \vd_{k}(\vx_{k},\nabla F(\vx_{k});\phi),~k=0,1,\cdots,K-1,
\end{equation}
where $\phi$ is the learnable parameter that can be trained by minimizing a loss function: 
\begin{equation}
    \label{eq:loss}
\min_{\phi}\cL(\phi) := \mathbb{E}_{F\in\mathcal{F}} \Big[\sum_{k=1}^K w_{k}F(\vx_{k})\Big],
\end{equation}
where $\mathcal{F}$ is the problem set we concern and $\{w_{k}\}_{k=1}^{K}$ is a set of hand-tuned weighting coefficients. 
Such loss function aims at finding an update rule of $\vx_{k}$ such that the objective values $\{F(\vx_{k})\}$ are as small as possible for all $F\in\mathcal{F}$. 
This work and its following works~\citep{lv2017learning,wichrowska2017learned,wu2018understanding,metz2019understanding,chen2020training,shen2021learning,harrison2022closer} show that modeling $\vd_{k}$ with a deep neural network and learning a good update rule from data is doable. To train such models, they randomly pick some training samples from $\mathcal{F}$ and build estimates of the loss function defined in \eqref{eq:loss}. Such learned rules are able to generalize to unseen instances from $\mathcal{F}$, i.e., the problems similar to the training samples. This method is quite generic and we can use it as long as we can access the gradient or subgradient of $F$. For simplicity, we name the method in \eqref{eq:l2o} as \emph{generic L2O}.

Generic L2O is flexible and applicable to a broad class of problems. However, generalizing the learned update rules to out-of-distribution testing problems is quite challenging and a totally free $\vd_{k}$ usually leads to overfitting \citep{metz2020tasks,metz2022practical}. In this paper, we propose an approach to explicitly regularize the update rule $\vd_{k}$. Our motivation comes from some common properties that basic optimization algorithms should satisfy. For example, if an iterate $\vx_{k}$ reaches one of the minimizers of the objective $F(\vx)$, the next iterate $\vx_{k+1}$ should be fixed. Such condition is satisfied by many basic algorithms like gradient descent, but not necessarily satisfied if $\vd_{k}$ is free to choose. In this paper, we refer to an update rule as \emph{a good rule} if it fulfills these conditions.
Our main contributions are three-fold:
\begin{enumerate}
    \item We strictly describe some basic mathematical conditions that a good update rule should satisfy on convex optimization problems.
    \item Based on these conditions, we derive a math-inspired structure of the explicit update rule on $\vx_{k}$.
    \item We numerically validate that our proposed scheme has superior generalization performance. An update rule trained with randomly generated data can even perform surprisingly well on real datasets.
\end{enumerate}

\paragraph{Organization.} The rest of this paper is organized as follows. In Section~\ref{sec:cond}, we derive mathematical structures for L2O models. In Section~\ref{sec:algo}, we propose a novel L2O model and discuss its relationship with other L2O models. In Section~\ref{sec:exp}, we verify the empirical performance of the proposed model via numerical experiments. In Section~\ref{sec:conclusion}, we conclude the paper with some discussions on the future directions.

\section{Deriving Mathematical Structures for L2O Update Rule}
\label{sec:cond}
In this study, we consider optimization problems in the form of
$\min_{\vx\in\mathbb{R}^n} F(\vx) = f(\vx)  + r(\vx)$, where $f(\vx)$ is a smooth convex function with Lipschitz continuous gradient, and $r(\vx)$ is a convex function that may be non-smooth. More rigorously, we write that $f \in \cF_{L}(\mathbb{R}^n)$ and $r \in \cF(\mathbb{R}^n)$ where the function spaces $\cF(\mathbb{R}^n)$ and $\cF_{L}(\mathbb{R}^n)$ are defined below.
\begin{definition}[Spaces of Objective Functions]
\label{define:f-space}
 We define function spaces $\cF(\mathbb{R}^n)$ and $\cF_{L}(\mathbb{R}^n)$ as
\[
\begin{aligned}
 \cF(\mathbb{R}^n) &= \Big\{r: \mathbb{R}^n\to \mathbb{R} ~
 \Big|~  r \textnormal{ is proper, closed and convex} \Big\},\\
 \cF_{L}(\mathbb{R}^n) &= \Big\{f: \mathbb{R}^n\to \mathbb{R}~
 \Big|~  f\textnormal{ is convex, differentiable, and} \\
 &\quad~~ \|\nabla f(\vx)- \nabla f(\vy)\| \leq L\|\vx-\vy\|, \forall \vx,\vy \in \mathbb{R}^n \Big\}.
\end{aligned}\]
\end{definition}
The first derivative of $F$ plays an important role in the update rule \eqref{eq:l2o}. For differentiable function $f \in \cF_{L}(\mathbb{R}^n)$, we can access to its gradient $\nabla f(\vx)$. For non-differentiable function $r \in \cF(\mathbb{R}^n)$, we have to use the concepts of \emph{subgradient} and \emph{subdifferential} that are described below.
\begin{definition}[Subdifferential and Subgradient]
\label{define:subgra}
For $r \in \cF(\mathbb{R}^n)$, its subdifferential at $\vx$ is defined as
\[\partial r(\vx) = \big\{\vg \in \mathbb{R}^n~\big|~ r(\vy) - r(\vx) \geq \vg^\top (\vy - \vx),~\forall \vy \in \mathbb{R}^n\big\}.\]
Each element in the subdifferential, i.e., each $\vg \in \partial r(\vx)$, is a subgradient of function $r$ at point $\vx$.
\end{definition}

\paragraph{Settings of $\vd_{k}$.} We clarify some definitions about the update direction $\vd_{k}$. A general parameterized update rule can be written as
\begin{equation}
    \label{eq:rule-z}
    \vx_{k+1}=\vx_{k}-\vd_{k}(\vz_{k};\phi),
\end{equation}
where $\vz_{k}\in\cZ$ is the \emph{input vector} and $\cZ$ is the \emph{input space}. The input vector may involve dynamic information such as $\{\vx_{k}, F(\vx_{k}), \nabla F(\vx_{k})\}$. Take \citep{andrychowicz2016learning} as an example, as described in \eqref{eq:l2o}, the input vector is $\vz_{k} = [\vx_{k}^\top, \nabla F(\vx_{k})^\top ]^\top$ and the input space is $\cZ = \mathbb{R}^{2n}$. 
In our theoretical analysis,
we relax the structure of \eqref{eq:rule-z} and use a general update rule $\vd_{k}: \cZ\to\mathbb{R}^{n}$ instead of the parameterized rule $\vd_{k}(\vz_{k}; \phi)$ and write \eqref{eq:rule-z} as:
\begin{equation}
    \label{eq:l2o-general}
    \vx_{k+1} = \vx_{k} - \vd_{k}(\vz_{k}).
\end{equation}

To facilitate the theoretical analysis, we assume the update direction $\vd_{k}$ is differentiable with respect to the input vector, and its derivatives are bounded. Specifically, $\vd_{k}$ is taken from the space $\cD_{C}(\cZ)$ which is defined below.
\begin{definition}[Space of Update Rules]
\label{define:d-space}
Let $\mathrm{J} \vd(\vz)$ denote the Jacobian matrix of operator $\vd:\cZ\to\mathbb{R}^{n}$ and $\|\cdot\|_{\mathrm{F}}$ denote the Frobenius norm, we define the space:
\[
\begin{aligned}
 \cD_{C}(\cZ) = \Big\{\vd: \cZ \to \mathbb{R}^n ~\big|~  \vd \textnormal{ is differentiable,~~~} \\
\| \mathrm{J}\vd(\vz)\|_{\mathrm{F}} \leq C,~\forall \vz \in \cZ \Big\}.
\end{aligned}
\]
\end{definition}

In practice, training the deep network that is parameterized from $\vd_{k}$ will usually need the derivatives of $\vd_{k}$. Thus, the differentiability and bounded Jacobian of $\vd_k$ are important for this study. 
Note that $\vd_{k}\in\cD_{C}(\cZ)$ has been commonly used and satisfied in many existing parameterization approaches, e.g., Long Short-Term Memory (LSTM), which is one of the most popular models adopted in L2O \citep{andrychowicz2016learning,lv2017learning}.

\subsection{Smooth Case}
\label{sec:smooth}
In the smooth case, $\nabla F(\vx)$ equals to $\nabla f(\vx)$ as the non-smooth part $r(\vx)=0$. Thus, \eqref{eq:l2o-general} can be written as: 
\begin{equation}
    \label{eq:l2o-gd}
    \vx_{k+1} = \vx_{k} - \vd_{k}(\vx_{k},\nabla f(\vx_{k})).
\end{equation}
We refer \eqref{eq:l2o-gd} to be a good update rule if it satisfies the following two assumptions: 

\paragraph{Asymptotic Fixed Point Condition.} We assume that $\vx_{k+1}=\vx_{\ast}$ as long as $\vx_{k} = \vx_{\ast}$, where $\vx_\ast \in \argmin_{\vx} f(\vx)$. In other words, if $\vx_{k}$ is exactly a solution, the next iterate should be fixed. Substituting both $\vx_{k}$ and $\vx_{k+1}$ with $\vx_{\ast}$, we obtain: 
\[
\vx_{\ast} = \vx_{\ast} - \vd_{k}(\vx_{\ast},\nabla f(\vx_{\ast})),\quad \text{for all }k = 0,1,2,\cdots. 
\]
Convex analysis theory tells us $\nabla f(\vx_{\ast}) = \mathbf{0}$, and we obtain $\vd_{k}(\vx_{\ast},\mathbf{0}) = \mathbf{0}$. Instead of using this strong assumption, we relax it and assume $\vd_{k}(\vx_{\ast},\mathbf{0}) \to \mathbf{0}$ as $k\to\infty$. Formally, it is written as below and coined as \eqref{eq:fp1}:
    \begin{equation}
        \label{eq:fp1}\tag{FP1}
       \text{For any $\vx_\ast \in \argmin_{\vx\in\mathbb{R}^n}f(\vx)$, $\lim_{k\to\infty}\vd_{k}(\vx_{\ast},\mathbf{0}) = \mathbf{0}$.} 
    \end{equation}
    Such a condition can be viewed as an extension to the Fixed Point Encoding \citep{ryu2022large} in optimization, which is useful guidance for designing efficient convex optimization algorithms.

\paragraph{Global Convergence.} We assume that, the sequence $\{\vx_{k}\}_{k=0}^{\infty}$ converges to one of the minimizers of the objective function $f(\vx)$, as long as it yields the update rule \eqref{eq:l2o-gd}. Formally, it is written as \eqref{eq:gc1}:
    \begin{align}
         &\text{For any sequences $\{\vx_{k}\}_{k=0}^{\infty}$ generated by \eqref{eq:l2o-gd}, there exists} \nonumber \\
         &\text{$\vx_\ast \in \argmin_{\vx\in\mathbb{R}^n}f(\vx)$ such that $\lim_{k\to\infty}\vx_{k}= \vx_{\ast}$.}  \label{eq:gc1}\tag{GC1}
    \end{align}
Actually, assumptions \eqref{eq:fp1} and \eqref{eq:gc1} are fundamental in the field of optimization and can be satisfied by many basic update schemes. For example, as long as $f \in \cF_{L}(\mathbb{R}^n)$, gradient descent satisfies \eqref{eq:fp1} unconditionally and satisfies \eqref{eq:gc1} with a properly chosen step size. To outperform the vanilla update rules like gradient descent, a learned update rule $\vd_{k}$ should also satisfy \eqref{eq:fp1} and \eqref{eq:gc1}. 

The following theorem provides an analysis on $\vd_{k}$ under \eqref{eq:fp1} and \eqref{eq:gc1}. Note that proofs of all theorems in this section are deferred to the Appendix.
 \begin{theorem}
 \label{theo:gd}
 Given $f \in \cF_{L}(\mathbb{R}^n)$, we pick a sequence of operators $\{\vd_{k}\}_{k=0}^{\infty}$ with $\vd_{k}\in \cD_{C}(\mathbb{R}^{2n})$ and generate  $\{\vx_{k}\}_{k=0}^{\infty}$ by \eqref{eq:l2o-gd}. 
If both \eqref{eq:fp1} and \eqref{eq:gc1} hold, then for all $k=0,1,2,\cdots$, there exist $\vP_{k} \in \mathbb{R}^{n \times n}$ and $\vb_{k} \in \mathbb{R}^{n}$ satisfying 
\begin{equation*}
    \vd_{k}(\vx_{k},\nabla f(\vx_{k})) = \vP_{k}\nabla f(\vx_{k}) + \vb_{k},
\end{equation*}
with $\vP_{k}$ is bounded and $\vb_{k}\to \mathbf{0}$ as $k\to\infty$.
 \end{theorem}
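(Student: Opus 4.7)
The plan is to prove Theorem~\ref{theo:gd} by a direct application of the fundamental theorem of calculus along a line segment in the input space $\mathbb{R}^{2n}$. By (GC1), the iterates $\{\vx_{k}\}$ converge to some $\vx_{\ast} \in \argmin f$, and since $f$ is convex and differentiable we have $\nabla f(\vx_{\ast}) = \mathbf{0}$. The idea is to expand $\vd_{k}$ around the base point $(\vx_{\ast}, \mathbf{0})$ and use the uniform Jacobian bound from $\cD_{C}(\mathbb{R}^{2n})$ to extract a bounded linear coefficient in front of $\nabla f(\vx_{k})$.

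First, I would parametrize the segment from $(\vx_{\ast}, \mathbf{0})$ to $(\vx_{k}, \nabla f(\vx_{k}))$ by $\gamma(t) := (\vx_{\ast} + t(\vx_{k} - \vx_{\ast}),\, t\,\nabla f(\vx_{k}))$ for $t \in [0,1]$, and split $\mathrm{J}\vd_{k}$ into two $n\times n$ blocks $[\mathrm{J}_{1}\vd_{k},\, \mathrm{J}_{2}\vd_{k}]$ corresponding to the two input arguments. The fundamental theorem of calculus then gives
\[
\vd_{k}(\vx_{k}, \nabla f(\vx_{k})) - \vd_{k}(\vx_{\ast}, \mathbf{0}) = \vA_{k}(\vx_{k} - \vx_{\ast}) + \vP_{k}\,\nabla f(\vx_{k}),
\]
where $\vA_{k} := \int_{0}^{1}\mathrm{J}_{1}\vd_{k}(\gamma(t))\,dt$ and $\vP_{k} := \int_{0}^{1}\mathrm{J}_{2}\vd_{k}(\gamma(t))\,dt$. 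Setting $\vb_{k} := \vd_{k}(\vx_{\ast}, \mathbf{0}) + \vA_{k}(\vx_{k} - \vx_{\ast})$ yields the claimed decomposition $\vd_{k}(\vx_{k}, \nabla f(\vx_{k})) = \vP_{k}\,\nabla f(\vx_{k}) + \vb_{k}$.

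Second, I would verify the two required bounds. Since $\|\mathrm{J}\vd_{k}(\vz)\|_{\mathrm{F}}^{2} = \|\mathrm{J}_{1}\vd_{k}(\vz)\|_{\mathrm{F}}^{2} + \|\mathrm{J}_{2}\vd_{k}(\vz)\|_{\mathrm{F}}^{2} \leq C^{2}$ pointwise, both blocks have Frobenius norm at most $C$, so by the triangle inequality for integrals $\|\vP_{k}\|_{\mathrm{F}} \leq C$ and $\|\vA_{k}\|_{\mathrm{F}} \leq C$ uniformly in $k$; in particular $\vP_{k}$ is bounded. Then $\|\vb_{k}\| \leq \|\vd_{k}(\vx_{\ast}, \mathbf{0})\| + C\,\|\vx_{k} - \vx_{\ast}\|$, the first term tending to $0$ by (FP1) applied at the specific minimizer $\vx_{\ast}$ furnished by (GC1), and the second vanishing because $\vx_{k} \to \vx_{\ast}$. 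Hence $\vb_{k} \to \mathbf{0}$.

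The one delicate point is the interaction between (FP1) and (GC1): (FP1) is stated for every $\vx_{\ast} \in \argmin_{\vx} f(\vx)$, which is exactly what the argument needs, since one cannot predict in advance which minimizer the sequence will converge to. Beyond that the argument is essentially mechanical: differentiability of $\vd_{k}$ makes the path integral well defined, the convexity of the domain $\mathbb{R}^{2n}$ ensures $\gamma$ stays inside, and the Frobenius bound on $\mathrm{J}\vd_{k}$ does all the quantitative work.
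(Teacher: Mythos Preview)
Your argument is correct and follows the same overall strategy as the paper: linearize $\vd_{k}$ around the base point $(\vx_{\ast},\mathbf{0})$, take the coefficient of $\nabla f(\vx_{k})$ as $\vP_{k}$, and fold the $(\vx_{k}-\vx_{\ast})$ term together with $\vd_{k}(\vx_{\ast},\mathbf{0})$ into $\vb_{k}$. The only technical difference is the linearization device: the paper applies a coordinate-wise mean value theorem (its Lemma~\ref{lemma:mvt}), which yields matrices with rows evaluated at possibly different interpolation points and hence the bound $\|\vP_{k}\|\le\sqrt{n}\,C$, whereas your integral form of the fundamental theorem of calculus produces genuine averaged Jacobian blocks and the tighter bound $\|\vP_{k}\|_{\mathrm{F}}\le C$; since $\vd_{k}$ is everywhere differentiable with bounded Jacobian, the path composite $t\mapsto\vd_{k}(\gamma(t))$ is Lipschitz and the integral representation is justified.
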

Theorem~\ref{theo:gd} illustrates that an update rule $\vd_{k}$ is not completely free under assumptions \eqref{eq:fp1} and \eqref{eq:gc1}. It suggests the following structured update rule instead of the free update rule \eqref{eq:l2o-gd}:
\begin{equation} \label{eq:l2o-pb}
    \vx_{k+1} = \vx_{k} - \vP_{k}\nabla f(\vx_{k}) - \vb_{k},
\end{equation}
where $\vP_{k}$ is named as a \emph{preconditioner} and $\vb_{k} $ is named as a \emph{bias}. The scheme \eqref{eq:l2o-pb} covers several classical algorithms. 
 For example, with $\vP_{k} = \alpha \mathbf{I}$ and $\vb_{k} = \beta(\vx_{k} - \alpha \nabla f(\vx_{k}) - \vx_{k-1} - \alpha\nabla f(\vx_{k-1}))$, it reduces to Nesterov accelerated gradient descent~\citep{nesterov1983method}; with $\vb_{k} = \mathbf{0}$ and properly chosen $\vP_{k}$, it covers Newton's method and quasi-Newton method like L-BFGS~\citep{liu1989limited}.
 
 Furthermore, Theorem~\ref{theo:gd} implies that, as long as \eqref{eq:fp1} and \eqref{eq:gc1} are both satisfied, finding the optimal update direction $\vd_{k}$ equals to finding the optimal preconditioner and bias.  To train an update rule for smooth convex objective functions $f \in \cF_{L}$, one may parameterize $\vP_{k}$ and $\vb_{k}$ instead of parameterizing the entire $\vd_{k}$, that is,
\[ \vx_{k+1} = \vx_{k} - \vP_{k}\big(\vz_{k}; \phi\big) \nabla f(\vx_{k}) - \vb_{k}\big(\vz_{k}; \psi \big),\]
where the input vector $\vz_{k} = [\vx_{k}^\top,\nabla f(\vx_{k})^\top]^\top$. Detailed parameterization and training methods are later described in Section~\ref{sec:algo}.

Note that even if the update rule satisfies \eqref{eq:l2o-pb} instead of \eqref{eq:l2o-gd}, one cannot guarantee \eqref{eq:fp1} and \eqref{eq:gc1} hold. Under our assumptions, if one further makes assumptions on the trajectory of $\{x_{k}\}_{k=0}^{K}$, then they would obtain a convergence guarantee. This idea falls into a subfield of optimization called Performance Estimation Problem (PEP) \citep{taylor2017exact,ryu2020operator}. However, algorithms obtained by PEP are much slower than L2O on specific types of problems that a single user is concerned with. This is because PEP imposes many restrictions on the iteration path, while L2O can find shortcuts for particular problem types. 

Although we also impose restrictions (Asymptotic Fixed Point Condition and Global Convergence) on L2O, these restrictions target the asymptotic performance and the final fixed point, rather than the path. As a result, the math-structured L2O proposed in this paper avoids the limitations of convergence guarantees while allowing shortcuts. By analyzing the fixed point, we discover a more effective learnable optimizer structure. 
 
 \subsection{Non-Smooth Case}
 \label{sec:non-smooth}
 In the non-smooth case (i.e., $f(\vx)=0$), we use subgradient instead of gradient. A simple extension to \eqref{eq:l2o-gd} is to pick a subgradient $\vg_{k}$ from subdifferential $\partial r(\vx_{k})$ in each iteration and use it in the update rule:
 \begin{equation}
    \label{eq:l2o-subgd}
    \vx_{k+1} = \vx_{k} - \vd_{k}(\vx_{k},\vg_{k}),\quad \vg_{k} \in \partial r(\vx_{k}).
\end{equation}
Such an update rule is an extension to subgradient descent method: $\vx_{k+1}=\vx_{k}-\alpha_{k}\vg_{k}$. Compared with gradient descent in the smooth case, the convergence of subgradient descent method is usually  unstable, and it may not converge to the solution if a constant step size is used. To guarantee convergence, one has to use certain diminishing step sizes, which may lead to slow convergence. \citep{bertsekas2015convex}. 
For non-smooth problems, Proximal Point Algorithm (PPA)~\citep{rockafellar1976monotone} usually converges faster and more stably than the subgradient descent method. While subgradient descent method uses the \emph{explicit update}, PPA takes \emph{implicit update rule}: $\vx_{k+1}=\vx_{k}-\alpha_{k}\vg_{k+1}$, where $\vg_{k+1} \in \partial r(\vx_{k+1})$. Inspired by PPA, we propose to use the following implicit rule instead: 
\begin{equation}
    \label{eq:l2o-ppa}
    \vx_{k+1} = \vx_{k} - \vd_{k}(\vx_{k+1},\vg_{k+1}),\quad \vg_{k+1} \in \partial r(\vx_{k+1}).
\end{equation}
Generally speaking, it is hard to calculate $\vx_{k+1}$ from the implicit formula \eqref{eq:l2o-ppa} given $\vx_{k}$. However, the following discussion provides a mathematical structure of $\vd_{k}$ in \eqref{eq:l2o-ppa}, and, under some mild assumptions, \eqref{eq:l2o-ppa} can be written in a much more practical way.

With the same argument in the smooth case, we obtain the math description of the asymptotic fixed point condition: For any $\vx_\ast \in \argmin_{\vx} r(\vx)$, there exists a $\vg_{\ast} \in \partial r(\vx_{\ast})$ such that $\vd_{k}(\vx_{\ast},\vg_{\ast}) \to \mathbf{0}$ as $k \to \infty$. With convex analysis theory, it holds that $\mathbf{0} \in \partial r(\vx_{\ast})$ if and only if $\vx_\ast \in \argmin_{\vx} r(\vx)$. Thus, it is natural to take $\vg_{\ast} = \mathbf{0}$. Formally, it is written as
    \begin{equation}
        \label{eq:fp2}\tag{FP2}
       \text{For any $\vx_\ast \in \argmin_{\vx\in\mathbb{R}^n}r(\vx)$,  $\lim_{k\to\infty}\vd_{k}(\vx_{\ast},\mathbf{0}) = \mathbf{0}$.} 
    \end{equation}
    Similar to \eqref{eq:gc1}, in the non-smooth case, we require the sequence $\{\vx_{k}\}$ converges to one of the minimizers of the function $r(\vx)$. Formally, it is written as
    \begin{align}
         &\text{For any sequences $\{\vx_{k}\}_{k=0}^{\infty}$ generated by \eqref{eq:l2o-ppa}, there exists } \nonumber \\
         &\text{$\vx_\ast \in \argmin_{\vx\in\mathbb{R}^n}r(\vx)$ such that $\lim_{k\to\infty}\vx_{k}= \vx_{\ast}$.} \label{eq:gc2}\tag{GC2}
    \end{align}

\begin{theorem}
\label{theo:ppa}
Given $r \in \cF(\mathbb{R}^n)$, we pick a sequence of operators $\{\vd_{k}\}_{k=0}^{\infty}$ with $\vd_{k}\in \cD_{C}(\mathbb{R}^{2n})$ and generate $\{\vx_{k}\}_{k=0}^{\infty}$ by \eqref{eq:l2o-ppa}. 
If both \eqref{eq:fp2} and \eqref{eq:gc2} hold, 
then for all $k=0,1,2,\cdots$, there exist $\vP_{k} \in \mathbb{R}^{n \times n}$ and $\vb_{k} \in \mathbb{R}^{n}$ satisfying 
\[\vx_{k+1} = \vx_{k} - \vP_{k}\vg_{k+1} - \vb_{k}, \quad \vg_{k+1} \in \partial r(\vx_{k+1}),\]
with $\vP_{k}$ is bounded and $\vb_{k}\to \mathbf{0}$ as $k\to\infty$. If we further assume $\vP_{k}$ is symmetric positive definite, then $\vx_{k+1}$ can be uniquely determined through
\begin{equation}
    \label{eq:l2o-ppa-quadratic}
    \vx_{k+1} = \argmin_{\vx \in \mathbb{R}^n} r(\vx) + \frac{1}{2}\|\vx - \vx_{k} + \vb_{k}\|^2_{\vP_{k}^{-1}},
\end{equation}
where the norm $\|\cdot\|_{\vP_{k}^{-1}}$ is defined as $\|\vx\|_{\vP_{k}^{-1}} := \sqrt{\vx^\top \vP_{k}^{-1} \vx}$.
\end{theorem}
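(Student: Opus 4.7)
The argument parallels the proof of Theorem~\ref{theo:gd}, adapted to the implicit scheme and the subdifferential. It splits into three parts: construct the affine decomposition of $\vd_k$, establish the asymptotic behavior of $\vb_k$, and identify the proximal form.

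\emph{Step 1 (decomposition and boundedness of $\vP_k$).} Since $\vd_k$ is differentiable with Jacobian bounded in Frobenius norm by $C$, I would apply the fundamental theorem of calculus along the straight segment in the second argument from $(\vx_{k+1},\mathbf{0})$ to $(\vx_{k+1},\vg_{k+1})$, yielding
\[
\vd_k(\vx_{k+1},\vg_{k+1}) = \vd_k(\vx_{k+1},\mathbf{0}) + \underbrace{\Bigl(\int_0^1 \nabla_{\vg}\vd_k(\vx_{k+1},t\vg_{k+1})\, dt\Bigr)}_{=:\,\vP_k}\vg_{k+1}.
\]
Setting $\vb_k := \vd_k(\vx_{k+1},\mathbf{0})$ and substituting into \eqref{eq:l2o-ppa} gives $\vx_{k+1} = \vx_k - \vP_k\vg_{k+1} - \vb_k$. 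Because $\nabla_{\vg}\vd_k$ is an $n \times n$ sub-block of the full Jacobian of $\vd_k$, one has $\|\vP_k\|_{\mathrm{F}} \leq C$ uniformly in $k$.

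\emph{Step 2 ($\vb_k \to \mathbf{0}$).} By \eqref{eq:gc2}, the iterates converge to some $\vx_\ast \in \argmin_{\vx} r(\vx)$, and \eqref{eq:fp2} then yields $\vd_k(\vx_\ast,\mathbf{0}) \to \mathbf{0}$. The Jacobian bound provides $C$-Lipschitz continuity of $\vd_k$ in its first argument, so
\[
\|\vb_k\| \leq \|\vd_k(\vx_{k+1},\mathbf{0}) - \vd_k(\vx_\ast,\mathbf{0})\| + \|\vd_k(\vx_\ast,\mathbf{0})\| \leq C\|\vx_{k+1} - \vx_\ast\| + \|\vd_k(\vx_\ast,\mathbf{0})\|,
\]
and both terms vanish as $k \to \infty$.

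\emph{Step 3 (proximal form and uniqueness).} Assuming $\vP_k \succ 0$, I would invert it to rewrite the identity from Step~1 as $\vP_k^{-1}(\vx_k - \vb_k - \vx_{k+1}) = \vg_{k+1} \in \partial r(\vx_{k+1})$, which is precisely the first-order optimality condition $\mathbf{0} \in \partial r(\vx_{k+1}) + \vP_k^{-1}(\vx_{k+1} - \vx_k + \vb_k)$ for the objective in \eqref{eq:l2o-ppa-quadratic}. Convexity of $r$ together with strong convexity of the weighted quadratic (since $\vP_k^{-1} \succ 0$) makes the composite objective strongly convex, so its minimizer is unique, giving \eqref{eq:l2o-ppa-quadratic}.

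\textbf{Main obstacle.} The tightest step is coupling \eqref{eq:fp2} and \eqref{eq:gc2} in Step~2: \eqref{eq:fp2} provides $\vd_k(\vx_\ast,\mathbf{0}) \to \mathbf{0}$ only for a prescribed minimizer, while \eqref{eq:gc2} produces a (potentially trajectory-dependent) limit in $\argmin r$. Since \eqref{eq:fp2} is quantified over every element of $\argmin r$, one may legitimately apply it to the specific limit point produced by \eqref{eq:gc2}, but this is the junction where the two assumptions must be carefully stitched. A minor additional point worth flagging is that the pair $(\vP_k,\vb_k)$ depends on the iterate $\vx_{k+1}$ and the chosen subgradient $\vg_{k+1}$, so the claim is existence rather than uniqueness of the decomposition.
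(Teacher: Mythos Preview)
Your proof is correct and follows the same overall strategy as the paper: linearize $\vd_k$ via its differentiability to peel off a matrix coefficient on $\vg_{k+1}$, absorb the remainder into $\vb_k$, then combine \eqref{eq:fp2}, \eqref{eq:gc2}, and the Jacobian bound to send $\vb_k \to \mathbf{0}$; the proximal-form argument in Step~3 is identical to the paper's. The one technical difference is the linearization device and base point. The paper expands around $(\vx_\ast,\mathbf{0})$ using the coordinate-wise mean value theorem (Lemma~\ref{lemma:mvt}), obtaining $\vP_k = \vJ_{2,k}$ with $\|\vP_k\| \leq \sqrt{n}\,C$ and $\vb_k = \vJ_{1,k}(\vx_{k+1}-\vx_\ast) + \vd_k(\vx_\ast,\mathbf{0})$; you instead expand around $(\vx_{k+1},\mathbf{0})$ via the integral form of the fundamental theorem of calculus, which yields $\vP_k$ as a genuine averaged partial Jacobian and the sharper bound $\|\vP_k\|_{\mathrm{F}} \leq C$ without the $\sqrt{n}$ factor. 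Your route is a little cleaner on the constants, but the two arguments are otherwise equivalent, and your remark that $(\vP_k,\vb_k)$ depend on the iterate is exactly the caveat the paper also records.
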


Define the preconditioned \emph{proximal operator} of $r(\vx)$ as 
\begin{equation}
    \label{eq:define-prox}
    \prox_{r, \vP}(\Bar{\vx}):= \argmin_{\vx} r(\vx) + \frac{1}{2}\|\vx - \Bar{\vx} \|^2_{\vP^{-1}},
\end{equation}
where $\vP$ is a symmetric positive definite preconditioner.  Then \eqref{eq:l2o-ppa-quadratic} can be written as $\vx_{k+1}=\prox_{r,\vP_{k}}(\vx_{k}-\vb_{k})$. If we set $\vP_{k} = \vI$ and $\vb_{k}=\mathbf{0}$, \eqref{eq:l2o-ppa-quadratic} reduces to the standard PPA. Instead of learning a free update rule as shown in \eqref{eq:l2o-ppa}, Theorem~\ref{theo:ppa} suggests learning the preconditioner $\vP_{k}$ and the bias $\vb_{k}$ in a structured rule, as illustrated in equation \eqref{eq:l2o-ppa-quadratic}.

\subsection{Composite Case}
\label{sec:composite}

As special cases, the smooth case and non-smooth case provide important preliminaries to the composite case: $F(\vx)=f(\vx)+r(\vx)$.
Inspired by Theorems~\ref{theo:gd} and \ref{theo:ppa}, 
we use explicit formula for $f$ and implicit formula for $r$ in the composite case:
\begin{equation}
    \label{eq:l2o-fbs}
    \vx_{k+1} = \vx_{k} - \vd_{k}(\vx_{k},\nabla f(\vx_{k}),\vx_{k+1},\vg_{k+1}),
\end{equation}
where $\vg_{k+1} \in \partial r(\vx_{k+1})$ and the input vector $\vz_{k} = [\vx_{k}^\top,\nabla f(\vx_{k})^\top,\vx_{k+1}^\top,\vg_{k+1}^\top]^\top$ and input space is $\cZ = \mathbb{R}^{4n}$.

To derive the asymptotic fixed point condition in this case, we use the same arguments in Sections~\ref{sec:smooth} and \ref{sec:non-smooth}, and we obtain the following statement for all $\vx_{\ast}\in \argmin_{\vx} F(\vx)$:
\[
    \lim_{k\to \infty} \vd_{k}(\vx_{\ast},\nabla f(\vx_{\ast}),\vx_{\ast},\vg_{\ast}) = \mathbf{0}, ~ \text{for some }\vg_{\ast} \in \partial r(\vx_{\ast}).
\]
The convexity of $f$ and $r$ implies that $\mathbf{0} \in \nabla f(\vx_{\ast}) + \partial r(\vx_{\ast})$ if and only if $\vx_{\ast}\in \argmin_{\vx} F(\vx)$. Thus, it holds that $-\nabla f(\vx_{\ast}) \in \partial r(\vx_{\ast})$. With $\vg_{\ast} = -\nabla f(\vx_{\ast})$, one could obtain the formal statement of the fixed point condition. For any $\vx_\ast \in \argmin_{\vx\in\mathbb{R}^n}F(\vx)$, it holds that
\begin{align}
 \lim_{k\to \infty} \vd_{k}(\vx_{\ast},\nabla f(\vx_{\ast}),\vx_{\ast},-\nabla f(\vx_{\ast})) = \mathbf{0}. \label{eq:fp3}\tag{FP3}
\end{align}
The global convergence is stated as:
\begin{align}
         &\text{For any sequences $\{\vx_{k}\}_{k=0}^{\infty}$ generated by \eqref{eq:l2o-fbs}, there exists } \nonumber \\
         &\text{$\vx_\ast \in \argmin_{\vx\in\mathbb{R}^n}F(\vx)$ such that $\lim_{k\to\infty}\vx_{k}= \vx_{\ast}$.} \label{eq:gc3}\tag{GC3}
\end{align}

\begin{theorem}
\label{theo:fbs}
Given $f \in \cF_{L}(\mathbb{R}^n)$ and $r \in \cF(\mathbb{R}^n)$, we pick a sequence of operators $\{\vd_{k}\}_{k=0}^{\infty}$ with $\vd_{k}\in \cD_{C}(\mathbb{R}^{4n})$ and generate $\{\vx_{k}\}_{k=0}^{\infty}$ by \eqref{eq:l2o-fbs}. 
If both \eqref{eq:fp3} and \eqref{eq:gc3} hold,
 then for all $k=0,1,2,\cdots$, there exist $\vP_{k} \in \mathbb{R}^{n \times n}$ and $\vb_{k} \in \mathbb{R}^{n}$ satisfying 
\[\vx_{k+1} = \vx_{k} - \vP_{k}(\nabla f(\vx_{k}) - \vg_{k+1}) - \vb_{k},~\vg_{k+1} \in \partial r(\vx_{k+1}),\]
with $\vP_{k}$ is bounded and $\vb_{k}\to \mathbf{0}$ as $k\to\infty$. If we further assume $\vP_{k}$ is symmetric positive definite, then $\vx_{k+1}$ can be uniquely determined given $\vx_{k}$ through
\begin{equation}
    \label{eq:l2o-fbs-quadratic}
    \vx_{k+1} = \prox_{r, \vP_{k}}(\vx_{k} - \vP_{k}\nabla f(\vx_{k}) - \vb_{k}).
\end{equation}
\end{theorem}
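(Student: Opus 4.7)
The plan is to unify the techniques of Theorems~\ref{theo:gd} and \ref{theo:ppa} by identifying the combination of $\nabla f(\vx_k)$ and $\vg_{k+1}$ that vanishes at any fixed point. The convexity argument preceding \eqref{eq:fp3} shows $-\nabla f(\vx_\ast)\in \partial r(\vx_\ast)$ for every minimizer $\vx_\ast$ of $F$, so the natural quantity to isolate is $\vs_k := \nabla f(\vx_k) + \vg_{k+1}$, which must tend to $\mathbf{0}$ as the iterates converge.

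Concretely, I would reparameterize $\vd_k$ by substituting $\vg_{k+1} = \vs - \nabla f(\vx_k)$ into its fourth argument, producing the single-variable map
\[h_k(\vs) := \vd_k(\vx_k,\nabla f(\vx_k),\vx_{k+1},\vs - \nabla f(\vx_k)).\]
Since $\vd_k \in \cD_C(\mathbb{R}^{4n})$, the map $h_k$ is differentiable and its Jacobian coincides with the fourth block-column of $\mathrm{J}\vd_k$, whose Frobenius norm is at most $C$. Applying the fundamental theorem of calculus along the segment from $\vs = \mathbf{0}$ to $\vs = \vs_k$ would give $h_k(\vs_k) = h_k(\mathbf{0}) + \vP_k \vs_k$, where $\vP_k$ is the averaged fourth-block Jacobian along the segment (uniformly bounded by $C$) and $\vb_k := h_k(\mathbf{0}) = \vd_k(\vx_k,\nabla f(\vx_k),\vx_{k+1},-\nabla f(\vx_k))$, yielding the theorem's linear decomposition. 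To show $\vb_k \to \mathbf{0}$, I would invoke that \eqref{eq:gc3} forces $\vx_k,\vx_{k+1} \to \vx_\ast$ for some $\vx_\ast \in \argmin_{\vx} F(\vx)$, that $\nabla f$ is $L$-Lipschitz, and that $\vd_k$ is $C$-Lipschitz through its bounded Jacobian; thus $\vb_k$ stays asymptotically close to $\vd_k(\vx_\ast,\nabla f(\vx_\ast),\vx_\ast,-\nabla f(\vx_\ast))$, which vanishes by \eqref{eq:fp3}.

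Finally, when $\vP_k$ is symmetric positive definite, rearranging the update as
\[\vP_k^{-1}\bigl((\vx_k - \vP_k\nabla f(\vx_k) - \vb_k) - \vx_{k+1}\bigr) = \vg_{k+1} \in \partial r(\vx_{k+1})\]
is precisely the first-order optimality condition for the strongly convex program defining $\prox_{r,\vP_k}(\vx_k - \vP_k\nabla f(\vx_k) - \vb_k)$, and strict convexity from $\vP_k^{-1} \succ 0$ secures the uniqueness claim. The main obstacle will be the subgradient: direct linearization in $\vx_{k+1}$ is unavailable because $r$ is non-smooth and $\vx_{k+1}\mapsto \vg_{k+1}$ need not be single-valued. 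Promoting $\vg_{k+1}$ (through $\vs_k$) to an independent input variable and absorbing the residual $(\vx_k,\vx_{k+1})$-dependence into $\vb_k$ side-steps this, with \eqref{eq:fp3} and \eqref{eq:gc3} jointly forcing $\vb_k \to \mathbf{0}$.
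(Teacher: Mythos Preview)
Your argument is correct and reaches the same conclusion, but the construction of $(\vP_k,\vb_k)$ is organized differently from the paper's. The paper applies its mean-value Lemma~\ref{lemma:mvt} to the full difference $\vd_k(\vx_k,\nabla f(\vx_k),\vx_{k+1},\vg_{k+1}) - \vd_k(\vx_\ast,\nabla f(\vx_\ast),\vx_\ast,-\nabla f(\vx_\ast))$, producing four Jacobian blocks $\vJ_{1,k},\dots,\vJ_{4,k}$; it then algebraically regroups so that $\vJ_{4,k}$ multiplies $\nabla f(\vx_k)+\vg_{k+1}$, sets $\vP_k=\vJ_{4,k}$, and dumps the $\vJ_{1,k},\vJ_{2,k},(\vJ_{3,k}-\vJ_{4,k})$ terms together with $\hat\vd_k$ into $\vb_k$. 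You instead perform the substitution $\vg_{k+1}=\vs-\nabla f(\vx_k)$ \emph{before} linearizing, so that a single fundamental-theorem-of-calculus step in the variable $\vs$ directly yields $\vP_k$ as the averaged fourth-block Jacobian and $\vb_k=\vd_k(\vx_k,\nabla f(\vx_k),\vx_{k+1},-\nabla f(\vx_k))$, after which the uniform $C$-Lipschitz property of $\vd_k$ plus \eqref{eq:gc3} and \eqref{eq:fp3} drive $\vb_k\to\mathbf 0$. Your route is slightly more economical (no extraneous $\vJ_{1,k},\vJ_{2,k},\vJ_{3,k}$ to carry), while the paper's route has the virtue of reusing Lemma~\ref{lemma:mvt} verbatim, keeping the proofs of Theorems~\ref{theo:gd}--\ref{theo:nesterov} structurally uniform. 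The two constructions generally produce \emph{different} admissible pairs $(\vP_k,\vb_k)$, but the theorem only asserts existence, so both are valid. Your treatment of the proximal identification under $\vP_k\succ 0$ matches the paper's.
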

 
With $\vb_{k}=\mathbf{0}$ and $\vP_{k} = \alpha \mathbf{I}$, \eqref{eq:l2o-fbs-quadratic} reduces to a standard Proximal Gradient Descent (PGD). 
Therefore, scheme \eqref{eq:l2o-fbs-quadratic} is actually an extension of PGD with a preconditioner $\vP_{k}$ and a bias $\vb_{k}$. Theorem~\ref{theo:fbs} implies that it's enough to learn an extended PGD instead of a free scheme \eqref{eq:l2o-fbs}.

\subsection{Longer Horizon}
 
Those update schemes \eqref{eq:l2o-gd}, \eqref{eq:l2o-ppa} and \eqref{eq:l2o-fbs} introduced in previous sections  explicitly depend on only the current status $\vx_{k}$. Now we introduce an auxiliary variable $\vy_{k}$ that encodes historical information through operator $\vm$:
 \begin{equation}
     \label{eq:l2o-nesterov-y}
     \vy_{k} = \vm(\vx_{k},\vx_{k-1},\cdots,\vx_{k-T}).
 \end{equation}
To facilitate parameterization and training, we assume $\vm$ is differentiable: $\vm \in \cD_{C}(\mathbb{R}^{ (T+1)\times n})$ (see Definition~\ref{define:d-space}). With $\vy_{k}$, we could encode more information into the update rule and extend \eqref{eq:l2o-fbs} to the following:
 \begin{align}
& \vx_{k+1}  = \vx_{k} - \vd_{k}(\vx_{k},\nabla f(\vx_{k}),\vx_{k+1},\vg_{k+1},\vy_{k},\nabla f(\vy_{k})), \nonumber \\
 & \text{where }~~\vg_{k+1} \in \partial r(\vx_{k+1}). \label{eq:l2o-nesterov-x}
 \end{align}
 Now let's derive the asymptotic fixed point condition and the global convergence that \eqref{eq:l2o-nesterov-y} and \eqref{eq:l2o-nesterov-x} should follow. Since the global convergence requires $\vx_{k}\to\vx_{\ast}$, the continuity of operator $\vm$ implies the convergence of sequence $\{\vy_{k}\}$. If the limit point of $\{\vy_{k}\}$ is denoted by $\vy_{\ast}$, we can assume $\vy_{\ast} = \vx_{\ast}$ without loss of generality because, for any operator $\vm$,  we can always construct another operator by shifting the output: $\hat{\vm} = \vm -\vy_{\ast} + \vx_{\ast}$ such that the sequence generated by $\hat{\vm}$ converges to $\vx_{\ast}$. Roughly speaking, we assume that the sequence $\{\vx_{k},\vy_{k}\}$ generated by \eqref{eq:l2o-nesterov-y} and \eqref{eq:l2o-nesterov-x} satisfies $\vx_{k}\to\vx_{\ast}$ and $\vy_{k}\to\vx_{\ast}$. 
 By extending (\ref{eq:fp3}) and (\ref{eq:gc3}), 
 we obtain the formal statement of our assumptions that \eqref{eq:l2o-nesterov-y} and \eqref{eq:l2o-nesterov-x} should follow.

For any $\vx_\ast \in \argmin_{\vx\in\mathbb{R}^n}F(\vx)$, it holds that 
 \begin{align}
   \lim_{k\to \infty} & \vd_{k}(\vx_{\ast},\nabla f(\vx_{\ast}),\vx_{\ast},-\nabla f(\vx_{\ast}),\vx_{\ast},\nabla f(\vx_{\ast})) = \mathbf{0}, \nonumber\\
    & \vm(\vx_{\ast},\vx_{\ast},\cdots,\vx_{\ast}) = \vx_{\ast}. \label{eq:fp4}\tag{FP4}
\end{align}
For any sequences $\{\vx_{k}, \vy_{k}\}_{k=0}^{\infty}$ generated by \eqref{eq:l2o-nesterov-y} and \eqref{eq:l2o-nesterov-x}, there exists one $\vx_\ast \in \argmin_{\vx\in\mathbb{R}^n}F(\vx)$ such that 
\begin{equation}
    \label{eq:gc4}\tag{GC4}
\lim_{k\to\infty}\vx_{k} = \lim_{k\to\infty}\vy_{k} = \vx_{\ast}.
\end{equation}

\begin{theorem}
\label{theo:nesterov} Suppose $T=1$. 
Given $f \in \cF_{L}(\mathbb{R}^n)$ and $r \in \cF(\mathbb{R}^n)$, we pick an operator $\vm \in \cD_{C}(\mathbb{R}^{2n})$ and a sequence of operators $\{\vd_{k}\}_{k=0}^{\infty}$ with $\vd_{k}\in \cD_{C}(\mathbb{R}^{6n})$. 
If both \eqref{eq:fp4} and \eqref{eq:gc4} hold, for any bounded matrix sequence $\{\vB_{k}\}_{k=0}^{\infty}$, 
 there exist $\vP_{1,k},\vP_{2,k},\vA_{k} \in \mathbb{R}^{n \times n}$ and $\vb_{1,k},\vb_{2,k} \in \mathbb{R}^{n}$ satisfying 
\begin{align}
      \vx_{k+1}  & = \vx_{k} - (\vP_{1,k}-\vP_{2,k})\nabla f(\vx_{k}) - \vP_{2,k}\nabla f(\vy_{k}) - \vb_{1,k} \nonumber \\ 
      &- \vP_{1,k} \vg_{k+1} - \vB_{k}(\vy_{k} - \vx_{k}),~\vg_{k+1} \in \partial r(\vx_{k+1}), \label{eq:l2o-nesterov-d}\\ 
  \vy_{k+1} & = (\vI - \vA_{k})\vx_{k+1} + \vA_{k}\vx_{k} + \vb_{2,k} \label{eq:l2o-nesterov-m}
\end{align}
for all $k=0,1,2,\cdots$, with $\{\vP_{1,k},\vP_{2,k},\vA_{k}\}$ are bounded and $\vb_{1,k}\to \mathbf{0}, \vb_{2,k}\to \mathbf{0}$ as $k\to\infty$. If we further assume $\vP_{1,k}$ is uniformly symmetric positive definite\footnote{A sequence of uniformly symmetric positive definite matrices means that the smallest eigenvalues of all symmetric positive definite matrices are uniformly bounded away from zero.}, then we can substitute $\vP_{2,k}\vP_{1,k}^{-1}$ with $\vB_{k}$ and obtain
\begin{equation}
    \label{eq:l2o-nesterov-quadratic}
    \begin{aligned}
    \hat{\vx}_{k} &= \vx_{k} - \vP_{1,k}\nabla f(\vx_{k}),\\
    \hat{\vy}_{k} &= \vy_{k} - \vP_{1,k}\nabla f(\vy_{k}),\\
     \vx_{k+1} &= \prox_{r, \vP_{1,k}} \Big( (\vI - \vB_{k})\hat{\vx}_{k} + \vB_{k} \hat{\vy}_{k} - \vb_{1,k} \Big),\\
     \vy_{k+1} &= \vx_{k+1} + \vA_{k} (\vx_{k+1} - \vx_{k}) + \vb_{2,k}.
    \end{aligned}
\end{equation}
\end{theorem}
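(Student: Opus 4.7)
The plan is to extend the integral-Jacobian arguments used for Theorems~\ref{theo:gd}--\ref{theo:fbs} to the six-input operator $\vd_k$ and the two-input operator $\vm$, then match coefficients against the prescribed decomposition. First I would invoke \eqref{eq:gc4} to get $\vx_k,\vy_k\to\vx_\ast$, combine this with $L$-Lipschitz continuity of $\nabla f$ to get $\nabla f(\vx_k),\nabla f(\vy_k)\to \nabla f(\vx_\ast)$, and note that the natural fixed point for the argument tuple of $\vd_k$ is $\vz_\ast:=(\vx_\ast,\nabla f(\vx_\ast),\vx_\ast,-\nabla f(\vx_\ast),\vx_\ast,\nabla f(\vx_\ast))$ and that of $\vm$ is $(\vx_\ast,\vx_\ast)$. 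Since $\vd_k\in\cD_C(\mathbb{R}^{6n})$ and $\vm\in\cD_C(\mathbb{R}^{2n})$ both have Frobenius-bounded Jacobians, the fundamental theorem of calculus along the straight segment from $\vz_\ast$ yields
\[
\vd_k(\vz_k)=\vd_k(\vz_\ast)+\sum_{i=1}^{6}\vN_{i,k}(\va_i^k-\va_i^\ast), \qquad \vy_{k+1}-\vx_\ast=\vC_{1,k}(\vx_{k+1}-\vx_\ast)+\vC_{2,k}(\vx_k-\vx_\ast),
\]
where each $\vN_{i,k}$ and $\vC_{j,k}$ is an integrated partial Jacobian, uniformly bounded by $C$. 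By \eqref{eq:fp4}, the constant piece $\vd_k(\vz_\ast)\to\mathbf{0}$ and $\vm(\vx_\ast,\vx_\ast)=\vx_\ast$.

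I would then match the six-term expansion against the desired form \eqref{eq:l2o-nesterov-d}. Setting $\vP_{1,k}:=\vN_{4,k}$ matches the coefficient of $\vg_{k+1}$ exactly, and $\vP_{2,k}:=\vN_{6,k}$ matches the coefficient of $\nabla f(\vy_k)$; both are bounded by $C$. After substituting these choices, every residual monomial in the expansion takes the form (bounded matrix)$\times$(vanishing vector): the $\vx_k-\vx_\ast$ and $\vy_k-\vx_\ast$ pieces pick up the prescribed $\vB_k$ and produce residual coefficients $\vN_{1,k}+\vB_k$ and $\vN_{5,k}-\vB_k$; the $\vx_{k+1}-\vx_\ast$ piece carries $\vN_{3,k}$; and the $\nabla f(\vx_k)-\nabla f(\vx_\ast)$ piece carries $\vN_{2,k}-\vN_{4,k}+\vN_{6,k}$. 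Collecting all of them (together with $\vd_k(\vz_\ast)$) into the single vector $\vb_{1,k}$ and using \eqref{eq:gc4} together with Lipschitzness of $\nabla f$ shows $\vb_{1,k}\to\mathbf{0}$, proving \eqref{eq:l2o-nesterov-d}. An entirely parallel argument for $\vm$, with the choice $\vA_k:=\vC_{2,k}$, forces $\vb_{2,k}=(\vC_{1,k}+\vC_{2,k}-\vI)(\vx_{k+1}-\vx_\ast)\to\mathbf{0}$ and establishes \eqref{eq:l2o-nesterov-m}.

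To derive \eqref{eq:l2o-nesterov-quadratic} I would then assume $\vP_{1,k}$ uniformly SPD (so $\vP_{1,k}^{-1}$ is bounded), specialize the free parameter to $\vB_k:=\vP_{2,k}\vP_{1,k}^{-1}$, and rearrange \eqref{eq:l2o-nesterov-d}. This gives $\vx_{k+1}=\bar\vx_k-\vP_{1,k}\vg_{k+1}$ with $\bar\vx_k=(\vI-\vB_k)\hat\vx_k+\vB_k\hat\vy_k-\vb_{1,k}$ and $\vg_{k+1}\in\partial r(\vx_{k+1})$, which is precisely the first-order optimality condition of the strongly convex program defining $\prox_{r,\vP_{1,k}}(\bar\vx_k)$; uniqueness of $\vx_{k+1}$ follows from strict convexity in $\|\cdot\|_{\vP_{1,k}^{-1}}$, mirroring the treatment in Theorems~\ref{theo:ppa} and \ref{theo:fbs}.

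The main obstacle I anticipate is the bookkeeping of matching six integrated-Jacobian coefficients against only three free matrices $\vP_{1,k},\vP_{2,k},\vA_k$ when an arbitrary bounded $\vB_k$ has been fixed in advance: one must verify that the non-vanishing inputs (those attached to $\vg_{k+1}$, $\nabla f(\vx_k)$, and $\nabla f(\vy_k)$, none of which tend to zero) admit exact matches by $\vP_{1,k}$ and $\vP_{2,k}$, so that \emph{every} remaining term can legitimately be absorbed into a vanishing $\vb_{1,k}$. A secondary subtlety is the consistency of the subgradient used in the expansion: the expansion point must employ $\vg_\ast=-\nabla f(\vx_\ast)$, which is a valid element of $\partial r(\vx_\ast)$ only via the convex-analytic optimality condition $\mathbf{0}\in\nabla f(\vx_\ast)+\partial r(\vx_\ast)$ at any minimizer of $F$, and this choice must be compatible with the implicit $\vg_{k+1}$ produced by the prox in the final step.
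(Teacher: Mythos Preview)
Your proposal is correct and follows essentially the same route as the paper's proof: both linearize $\vd_k$ and $\vm$ about their fixed-point arguments, assign $\vP_{1,k},\vP_{2,k},\vA_k$ to the Jacobian blocks multiplying $\vg_{k+1}$, $\nabla f(\vy_k)$, and $\vx_k$ respectively, sweep every remaining (bounded matrix)$\times$(vanishing vector) term---including the artificially inserted $\vB_k(\vy_k-\vx_k)$---into $\vb_{1,k},\vb_{2,k}$, and then rearrange under the uniform-SPD hypothesis to obtain the prox characterization. The only difference is cosmetic: the paper invokes a coordinatewise Mean Value Theorem (its Lemma~\ref{lemma:mvt}, giving block bounds $\sqrt{n}C$) whereas you use the integral-remainder form of the fundamental theorem of calculus (giving the tighter bound $C$), but this plays no role beyond establishing boundedness.
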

 
In the update scheme \eqref{eq:l2o-nesterov-quadratic}, $\vb_{1,k}$ and $\vb_{2,k}$ are biases that play the same role with $\vb_{k}$ in \eqref{eq:l2o-fbs-quadratic}; $\vA_{k}$ can be viewed as an extension of Nesterov momentum and we name it as an \emph{accelerator}; $\vP_{1,k}$ is the preconditioner that plays a similar role as $\vP_{k}$ in \eqref{eq:l2o-fbs-quadratic}; $\vB_{k}$ is a balancing term between $\hat{\vx}_{k}$ and $\hat{\vy}_{k}$. If $\vB_{k}=\mathbf{0}$, then $\vx_{k+1}$ merely depends on $\vx_{k}$ and \eqref{eq:l2o-nesterov-quadratic} reduces to \eqref{eq:l2o-fbs-quadratic}; and if $\vB_{k}=\vI$, then $\vx_{k+1}$ merely depends on $\vy_{k}$ explicitly.

\section{An Efficient Math-Inspired L2O Model}
\label{sec:algo}

As long as the basic assumptions \eqref{eq:fp4} and \eqref{eq:gc4} hold, one could derive a math-structured update rule \eqref{eq:l2o-nesterov-quadratic} from generic update rule \eqref{eq:l2o-nesterov-y} and \eqref{eq:l2o-nesterov-x}.
Moreover, we suggest using diagonal matrices for $\vP_{1,k},\vB_{k},\vA_{k}$ in practice: 
\[\vP_{1,k} = \mathrm{diag}(\vp_{k}),~~\vB_{k} = \mathrm{diag}(\vb_{k}),~~\vA_{k} = \mathrm{diag}(\va_{k}),\]where $\vp_{k},\vb_{k},\va_{k} \in \mathbb{R}^{n}$ are vectors. Let $\mathbf{1}$ be the vector whose all elements are ones and $\odot$ be the element-wise multiplication. The suggested update rule then becomes:
\begin{equation}
    \label{eq:final-scheme}
     \begin{aligned}
     \hat{\vx}_{k} &= \vx_{k} - \vp_{k}\odot\nabla f(\vx_{k}),\\
     \hat{\vy}_{k} &= \vy_{k} - \vp_{k}\odot\nabla f(\vy_{k}),\\
      \vx_{k+1} &= \prox_{r, \vp_{k}} \Big( (\mathbf{1} - \vb_{k}) \odot \hat{\vx}_{k} + \vb_{k} \odot \hat{\vy}_{k} - \vb_{1,k} \Big),\\
      \vy_{k+1} &= \vx_{k+1} + \va_{k} \odot (\vx_{k+1} - \vx_{k}) + \vb_{2,k}.
    \end{aligned}
\end{equation}
%
We choose diagonal $\vP_{1,k},\vB_{k},\vA_{k}$ over full matrices for efficiency.
On one hand, the diagonal formulation reduces the degree of freedom of the update rule. Therefore, when $\vP_{1,k},\vB_{k},\vA_{k}$ are parameterized as the output of, or a part of, a learnable model and trained with data, the difficulty of training is decreased and thus the efficiency improved.
On the other hand, for a broad class of $r(\vx)$, the proximal operator $\prox_{r,\vp_{k}}$ has efficient explicit formula with the diagonal preconditioner\footnote{
Examples can be found in Appendix~\ref{sec:example-prox}.
} 
$\vp_{k}$. Although the non-diagonal formulation may lead to a (theoretically) better convergence rate, it could increase the computational difficulty of the proximal operator $\prox_{r, \vP_{k}}$. An interesting future topic is how to calculate \eqref{eq:final-scheme} with non-diagonal $\vP_{k}$ and how to train deep models to generate such $\vP_{k}$.

\paragraph{LSTM Parameterization.} Similar to \citep{andrychowicz2016learning,lv2017learning}, we model $\vp_{k}$, $\va_{k}$, $\vb_{k}$, $\vb_{1,k}$, $\vb_{2,k}$ as the output of a coordinate-wise LSTM, which is parameterized by learnable parameters $\phi_\text{LSTM}$ and takes the current estimate $\vx_k$ and the gradient $\nabla f(\vx_{k})$ as the input:
\begin{equation}
    \label{eq:final-scheme-lstm}
    \begin{aligned}
     \vo_{k},\vh_{k} = \mathrm{LSTM}\big(\vx_{k},\nabla f(\vx_{k}),\vh_{k-1}; \phi_\text{LSTM} \big),&\\
     \vp_{k}, \va_{k}, \vb_{k}, \vb_{1,k}, \vb_{2,k}= \mathrm{MLP}(\vo_{k}; \phi_{\text{MLP}}).&\\
    \end{aligned}
\end{equation}
Here, $\vh_k$ is the internal state maintained by the LSTM with $\vh_0$ randomly sampled from Gaussian distribution.
It is common in classic optimization algorithms to take positive $\vp_k$ and $\va_k$. Hence we post-process $\vp_k$ and $\va_k$ with an additional activation function such as sigmoid and softplus.
A ``coordinate-wise'' LSTM means that the same network is shared across all coordinates of $\vx_k$, so that this single model can be applied to optimization problems of any scale. \eqref{eq:final-scheme} and \eqref{eq:final-scheme-lstm} together define an optimization scheme. We call it an \emph{L2O optimizer}.

\paragraph{Training.} We train the proposed L2O optimizer, that is to find the optimal $\phi_\text{LSTM}$ and $\phi_\text{MLP}$ in \eqref{eq:final-scheme-lstm}, on a dataset $\mathcal{F}$ of optimization problems. Each sample in $\mathcal{F}$ is an instance of the optimization problem, which we call an \emph{optimizee}, and is characterized by its objective function $F$. During training, we apply optimizer to each optimizee $F$ for $K$ iterations to generate a sequence of iterates $(\vy_1, \dots, \vy_K)$, and optimize $\phi_\text{LSTM}$ and $\phi_\text{MLP}$ by minimizing the following loss function:
\begin{equation*}
    \label{eq:loss-train}
\min_{\phi_\text{LSTM},\phi_\text{MLP}}\cL(\phi_\text{LSTM},\phi_\text{MLP}) := \frac{1}{|\mathcal{F}|}\sum_{F\in\mathcal{F}} \Big[\frac{1}{K}\sum_{k=1}^K F(\vy_{k})\Big].
\end{equation*}


\paragraph{Compared with Algorithm Unrolling.} Algorithm Unrolling \citep{monga2019algorithm} is another line of works parallel to generic L2O. It was first proposed to fast approximate the solution of sparse coding \citep{gregor2010learning} which is named Learned ISTA (LISTA). Since then, many efforts have been made to further improve or better understand LISTA \citep{xin2016maximal,metzler2017learned,moreau2017understanding,chen2018theoretical,liu2019alista,ito2019trainable,chen2021hyperparameter}, as well as applying this idea to different optimization problems \citep{yang2016deep,zhang2018ista,adler2018learned,mardani2018neural,gupta2018cnn,solomon2019deep,xie2019differentiable,cai2021learned}.

The main difference between our method and algorithm-unrolling methods lies at how parameterization is done. Different from the LSTM parameterization \eqref{eq:final-scheme-lstm}, algorithm-unrolling methods turn $\vp_{k}$, $\va_{k}$, $\vb_{k}$, $\vb_{1,k}$, $\vb_{2,k}$ themselves as learnable parameters and directly optimize them from data.

However, such direct parameterization causes limitations on the flexibility of the model in many ways. It loses the ability to capture dynamics between iterations and tends to memorize more about the datasets. Moreover, direct parameterization means that we need to match the dimensions of the learnable parameters with the problem scale, which implies that the trained model can not be applied to optimization problems of a different scale at all during inference time. Although this can be worked around by reducing the parameters to scalars, it will significantly decrease the capacity of the model.

In fact, despite the difference in parameterization, our proposed scheme \eqref{eq:final-scheme} covers many existing algorithm-unrolling methods in the literature.
For example, 
if we use the standard LASSO objective as the objective function $F(\vx)$ and set $\va_{k}=\vb_{k}=\vb_{2,k}=\mathbf{0}$, we will recover Step-LISTA~\citep{ablin2019learning} with properly chosen $\vp_{k},\vb_{1,k}$.
More details and proofs are provided in the Appendix. 

\paragraph{Compared with Generic L2O.} Our proposed method uses a similar coordinate-wise LSTM parameterization as generic L2O methods. Therefore, both of these two share the flexibility of being applied to optimization problems of any scale. However, we constrain the update rule to have a specific form, i.e., the formulation in \eqref{eq:final-scheme}. The reduced degree of freedom enables the convergence analysis in Section~\ref{sec:cond} from the theoretical perspective and empirically works as a regularization so that the trained L2O optimizer is more stable and can generalize better, which is validated by our numerical observations in the next section.

We summarize in Table~\ref{tab:comparison-methods} the comparison between classic algorithms, algorithm-unrolling methods, generic L2O methods, and our math-inspired method in terms of theoretical convergence analysis, convergence speed, and flexibility.

\begin{table}[t]
    \vspace{-0.5em}
    \caption{Comparison between different types of methods of their theoretical convergence analysis (\textbf{Theory}), convergence speed (\textbf{Fast}), and \textbf{Flexibility}.} \label{tab:comparison-methods}
    \vspace{0.5em}
    \centering
    \begin{tabular}{c|ccc}
        \toprule
        Methods              & Theory        & Fast          & Flexibility  \\
        \midrule
        Classic Algorithms   & $\checkmark$  &    --         & $\checkmark$ \\
        Algorithm Unrolling  & $\checkmark$  & $\checkmark$  &        --    \\
        Generic L2O          &      --       & $\checkmark$  & $\checkmark$ \\
        Math-Inspired (Ours) & $\checkmark$  & $\checkmark$  & $\checkmark$ \\
        \bottomrule
    \end{tabular}   
\end{table}

\paragraph{Relationship with Operator Learning.} In this paper, we consider the proximal operator as an accessible basic routine and focus on learning the overall update rule that results in fast convergence. However, if the proximal operator is difficult to compute, one might explore another aspect of L2O: learning fast approximations of proximal operators \citep{zhang2017learning,meinhardt2017learning,li2022learning}. For instance, if the matrix $\vP_{1,k}$ in \eqref{eq:l2o-nesterov-quadratic} is not diagonal, calculating the proximal operator would be challenging. Additionally, our assumptions \eqref{eq:fp4} and \eqref{eq:gc4} allow the optimization problem $F(\vx)$ to have multiple optima. As a result, investigating diverse optima following the idea of \citep{li2022learning} using our proposed scheme could be an intriguing topic for future research.

\paragraph{Relationship with Meta-Learning.} L2O and Meta-Learning are closely related topics as they both deal with learning from experience in previous tasks to improve performance on new tasks. L2O treats tasks as optimization problems and aims to discover superior optimization algorithms, while Meta-Learning is a more comprehensive concept that focuses on training a model on a set of related tasks or problems to swiftly adapt to new, unseen tasks using knowledge acquired from prior tasks. For example, in our paper's equation \eqref{eq:l2o}, L2O seeks to learn $\vd_{k}$ while keeping the initialization $\vx_{0}$ fixed or randomized. Meanwhile, a typical Meta-Learning method like \citep{khodak2019provable} learns a suitable initialization from a series of observed tasks, enabling quick adaptation to unseen tasks. In addition to the initialization, \citep{khodak2019adaptive} also learns a meta-learning rate shared among different tasks. Furthermore, one can learn a regularization term based on the distance to a bias vector \citep{denevi2019learning} or even a conditional regularization term \citep{denevi2020advantage} from a set of tasks.

\section{Experimental Results}
\label{sec:exp}

We strictly follow the setting in \citep{lv2017learning} for experiment setup. More specifically, in all our experiments on the LSTM-based L2O models (including our method and other baseline competitors), we use two-layer LSTM cells with 20 hidden units with sigmoid activation functions. During training, each minibatch contains 64 instances of optimization problems, to which the L2O optimizers will be applied for 100 iterations. The 100 iterations are evenly segmented into 5 periods of 20 iterations. Within each of these, the L2O optimizers are trained with truncated Backpropagation Through Time (BPTT) with an Adam optimizer. All models are trained with 500 minibatches (32,000 optimization problems in total) generated synthetically, but are evaluated on both synthesized testing sets and real-world testing sets. We elaborate more on the data generation in the Appendix. The code is available online at \url{https://github.com/xhchrn/MS4L2O}.

\subsection{Ablation Study} 
\label{sec:abla}

We conduct an ablation study on LASSO to figure out the roles of $\vp_{k},\va_{k},\vb_{k},\vb_{1,k},\vb_{2,k}$ in our proposed scheme \eqref{eq:final-scheme}. Both the training and testing samples are independently sampled from the same random distribution. The form of LASSO is given below
\begin{equation}
\label{eq:lasso}
    \min_{\vx\in\mathbb{R}^n} F(\vx) = \frac{1}{2}\|\vA\vx-\vb\|^2 + \lambda \|\vx\|_1,
\end{equation}
where each tuple of $(\vA,\vb,\lambda)$ determines an objective function and thus a LASSO problem instance. The size of each instance is $\vA \in \mathbb{R}^{250\times500}$ and $\vb\in\mathbb{R}^{500}$ and other details are provided in the Appendix. We do not fix $\vA$ and, instead, let each LASSO instance take an independently generated $\vA$. This setting is fundamentally more challenging than those in most of algorithm-unrolling works \citep{gregor2010learning,liu2019alista,ablin2019learning,behrens2021neurally}.

On the benchmark, we compare the following settings: \textbf{PBA12}: $\vp_{k}$,$\va_{k}$,$\vb_{k}$,$\vb_{1,k}$,$\vb_{2,k}$ are all learnable.
    \textbf{PBA1}: $\vp_{k}$,$\va_{k}$,$\vb_{k}$,$\vb_{1,k}$ are learnable; $\vb_{2,k}$ is fixed as $\mathbf{0}$.
    \textbf{PBA2}: $\vp_{k}$,$\va_{k}$,$\vb_{k}$,$\vb_{2,k}$ are learnable; $\vb_{1,k}$ is fixed as $\mathbf{0}$.
    \textbf{PBA}: $\vp_{k}$,$\va_{k}$,$\vb_{k}$ are learnable; $\vb_{2,k}$ and $\vb_{1,k}$ are both fixed as $\mathbf{0}$.
    \textbf{PA}: $\vp_{k}$,$\va_{k}$ are learnable; $\vb_{2,k}$ and $\vb_{1,k}$ are both fixed as $\mathbf{0}$; $\vb_{k}$ is fixed as $\mathbf{1}$.
    \textbf{P}: only $\vp_{k}$ is learnable; $\va_{k}$, $\vb_{2,k}$, $\vb_{1,k}$ are fixed as $\mathbf{0}$; $\vb_{k}$ is fixed as $\mathbf{1}$.
    \textbf{A}: only $\va_{k}$ is learnable; $\vb_{2,k}$ and $\vb_{1,k}$ are both fixed as $\mathbf{0}$; $\vb_{k}$ is fixed as $\mathbf{1}$; $\vp_{k}$ is fixed as $(1/L)\mathbf{1}$. The results are reported in Figure~\ref{fig:lasso-ablation}.

\begin{figure}
 \includegraphics[width=\linewidth]{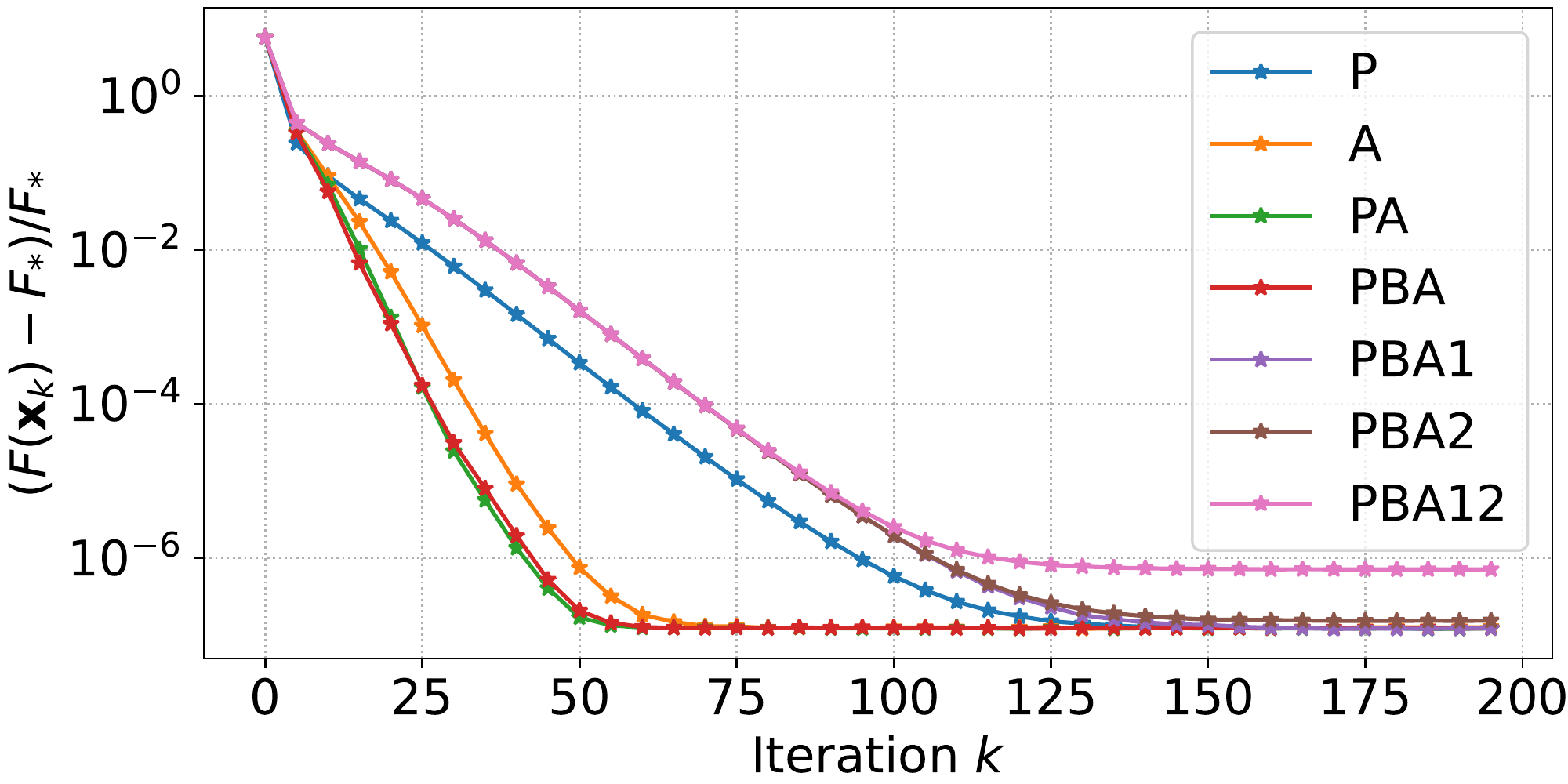}
  \caption{\label{fig:lasso-ablation}Ablation study on LASSO.} 
\end{figure}
\begin{figure}
  \includegraphics[width=\linewidth]{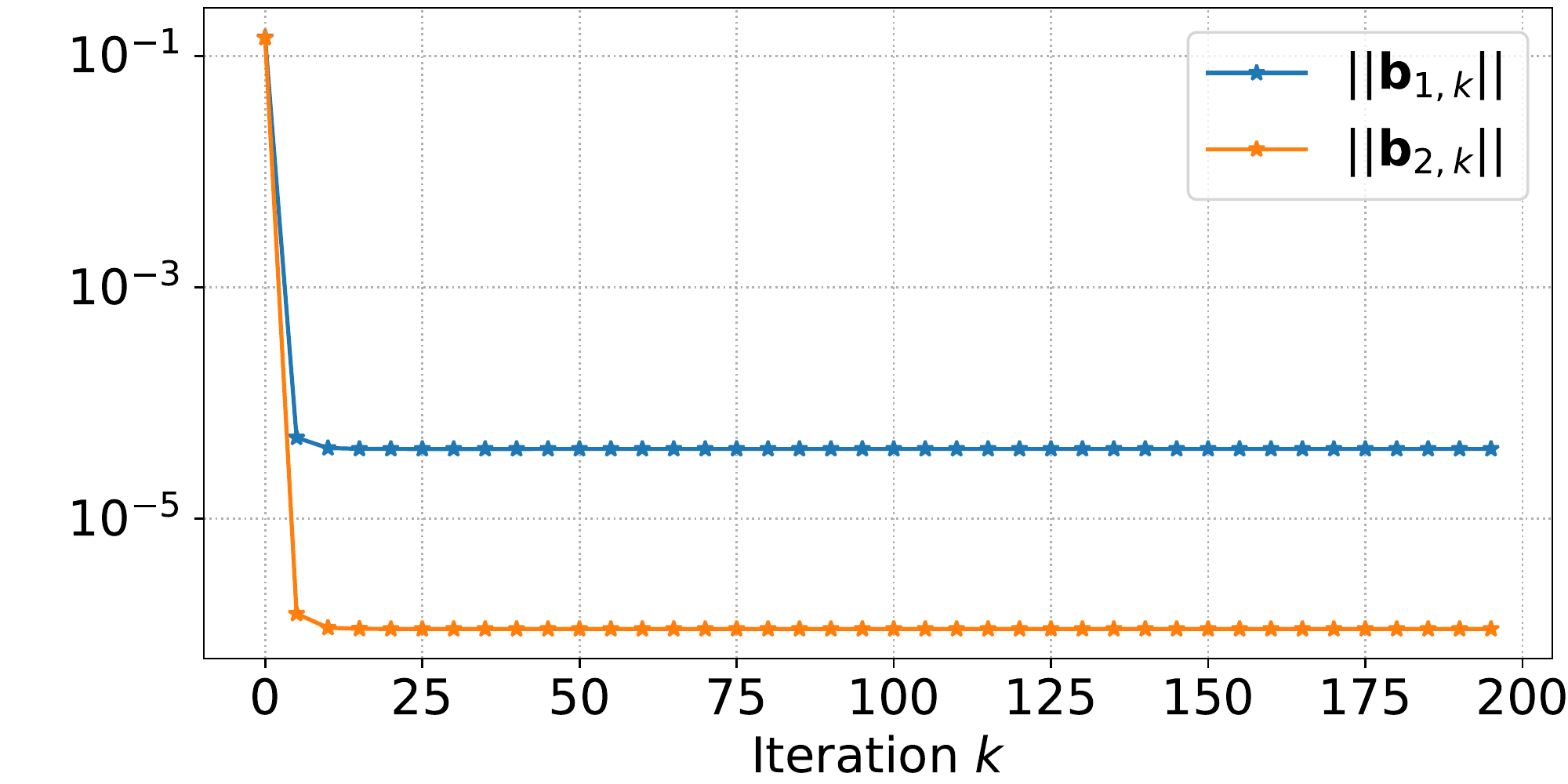}
  \caption{\label{fig:lasso-b12}Visualization of the learned $\vb_{1,k},\vb_{2,k}$.}
\end{figure}

In Figure~\ref{fig:lasso-ablation}, $F_{\ast}$ denotes the best objective value of each instance and $(F(\vx_{k})-F_{\ast})/F_{\ast}$ measures the average optimality gap on the test set. Each curve describes the convergence performance of a learned optimizer. From Figure~\ref{fig:lasso-ablation}, one can conclude that, with all of the learned optimizers, convergence can be reached within the machine precision. 
 
An interesting observation is that the proposed scheme \eqref{eq:final-scheme} may not benefit from parameterizing and learning more components. Comparing PBA, PBA1, PBA2 and PBA12, we find that fixing $\vb_{1,k}=\vb_{2,k}=\mathbf{0}$ is a better choice than learning them. This phenomenon can be explained by our theory. In Theorem~\ref{theo:nesterov}, both $\vb_{2,k}$ and $\vb_{1,k}$ are expected to converge to zero, otherwise, the convergence would be violated. 
However, in Figure~\ref{fig:lasso-b12} where we report the average values of $\|\vb_{1,k}\|$ and $\|\vb_{2,k}\|$ in PBA12, both $\|\vb_{1,k}\|$ and $\|\vb_{2,k}\|$ converge to a relatively small value after a few ($\leq 10$) iterations, but there is no guarantee that they converge exactly to zero given parameterization \eqref{eq:final-scheme-lstm}. Such observation actually validates our theories and suggests to fix $\vb_{2,k}$ and $\vb_{1,k}$ as $\mathbf{0}$ instead of learning them.
 
\paragraph{A Simplified Scheme.} Furthermore, comparing PA and PBA in Figure~\ref{fig:lasso-ablation}, we find that parameterizing $\vb_{k}$ does not show significant benefits. To reduce computational overhead, we adopt PA and fix $\vb_{1,k}=\vb_{2,k}=\mathbf{0}$ and $\vb_{k}=\mathbf{1}$, which reduces \eqref{eq:final-scheme} and \eqref{eq:final-scheme-lstm} to the following simplified scheme:
\begin{equation}
    \label{eq:l2o-pa}\tag{L2O-PA}
    \begin{aligned}
    \vo_{k},\vh_{k} &= \mathrm{LSTM}\big(\vx_{k},\nabla f(\vx_{k}),\vh_{k-1}; \phi_\text{LSTM}\big),\\
    \vp_{k},\va_{k} &= \mathrm{MLP}(\vo_{k}; \phi_{\text{MLP}}),\\
    \vx_{k+1} &= \prox_{r, \vp_{k}} \big( \vy_{k} - \vp_{k}\nabla f(\vy_{k}) \big),\\
    \vy_{k+1} &= \vx_{k+1} + \va_{k} \odot (\vx_{k+1} - \vx_{k}).
    \end{aligned}
\end{equation}

\subsection{Comparison with Competitors}

We compare \eqref{eq:l2o-pa} with some competitors in two settings. In the first setting (also coined as In-Distribution), the training and testing samples are generated independently with the same distribution. The second setting is much more challenging, where the models are trained on randomly synthetic data but tested on real data, which is coined as Out-of-Distribution or OOD.

\paragraph{Solving LASSO Regression.} In-Distribution experiments follow the settings in Section~\ref{sec:abla}, where the training and testing sets are generated randomly and independently, but share the same distribution. In contrast, Out-of-Distribution (OOD) experiments also generate training samples similarly to In-Distribution experiments, but the test samples are generated based on a natural image benchmark BSDS500~\citep{MartinFTM01}. Detailed information about the synthetic and real data used can be found in the Appendix.

\begin{figure}
    \centering
    \includegraphics[width=\linewidth]{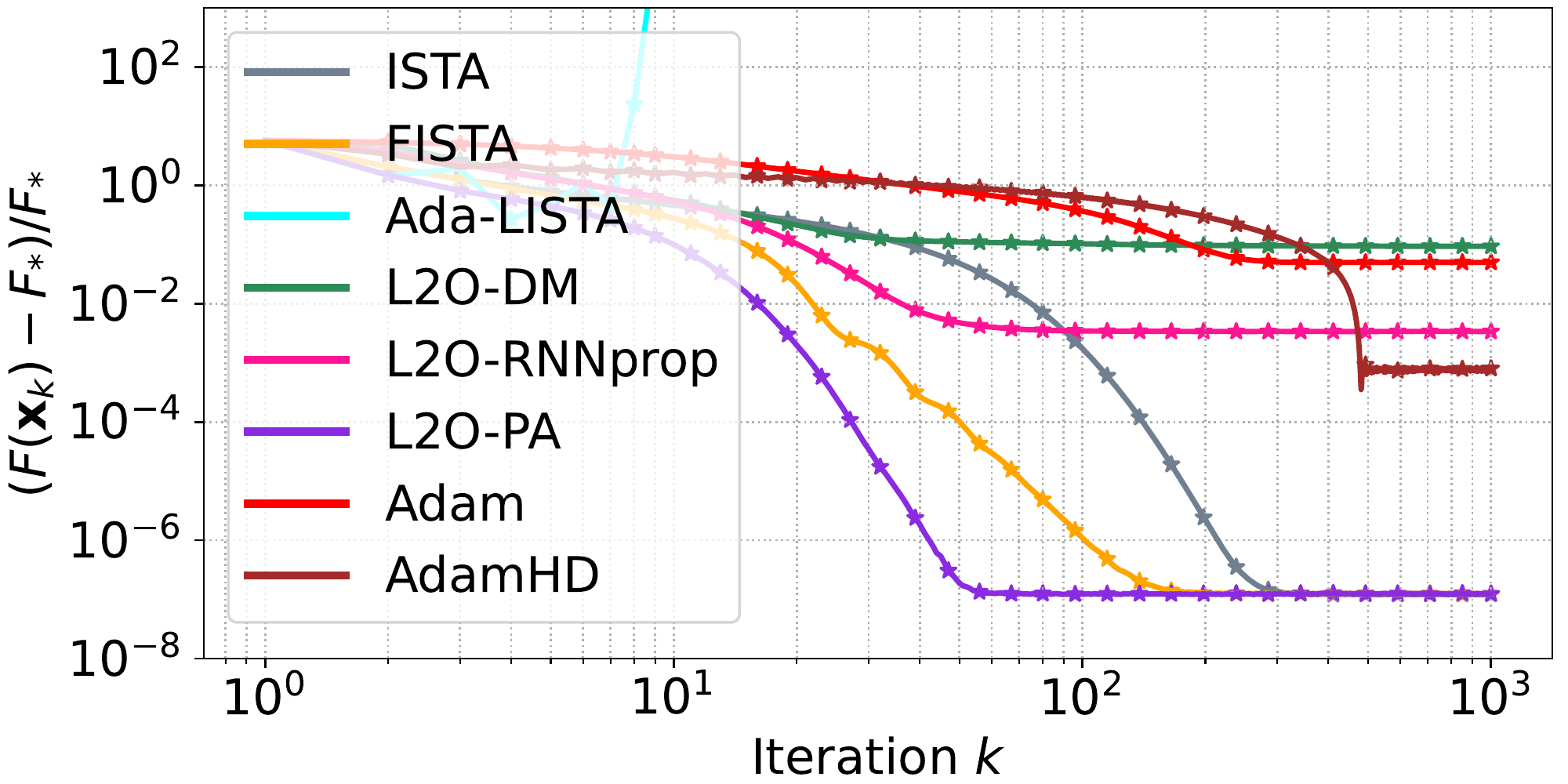}
    \caption{LASSO: Train and test on synthetic data.}
    \label{fig:lasso-benchmark}
\end{figure}

\begin{figure}
    \centering
    \includegraphics[width=\linewidth]{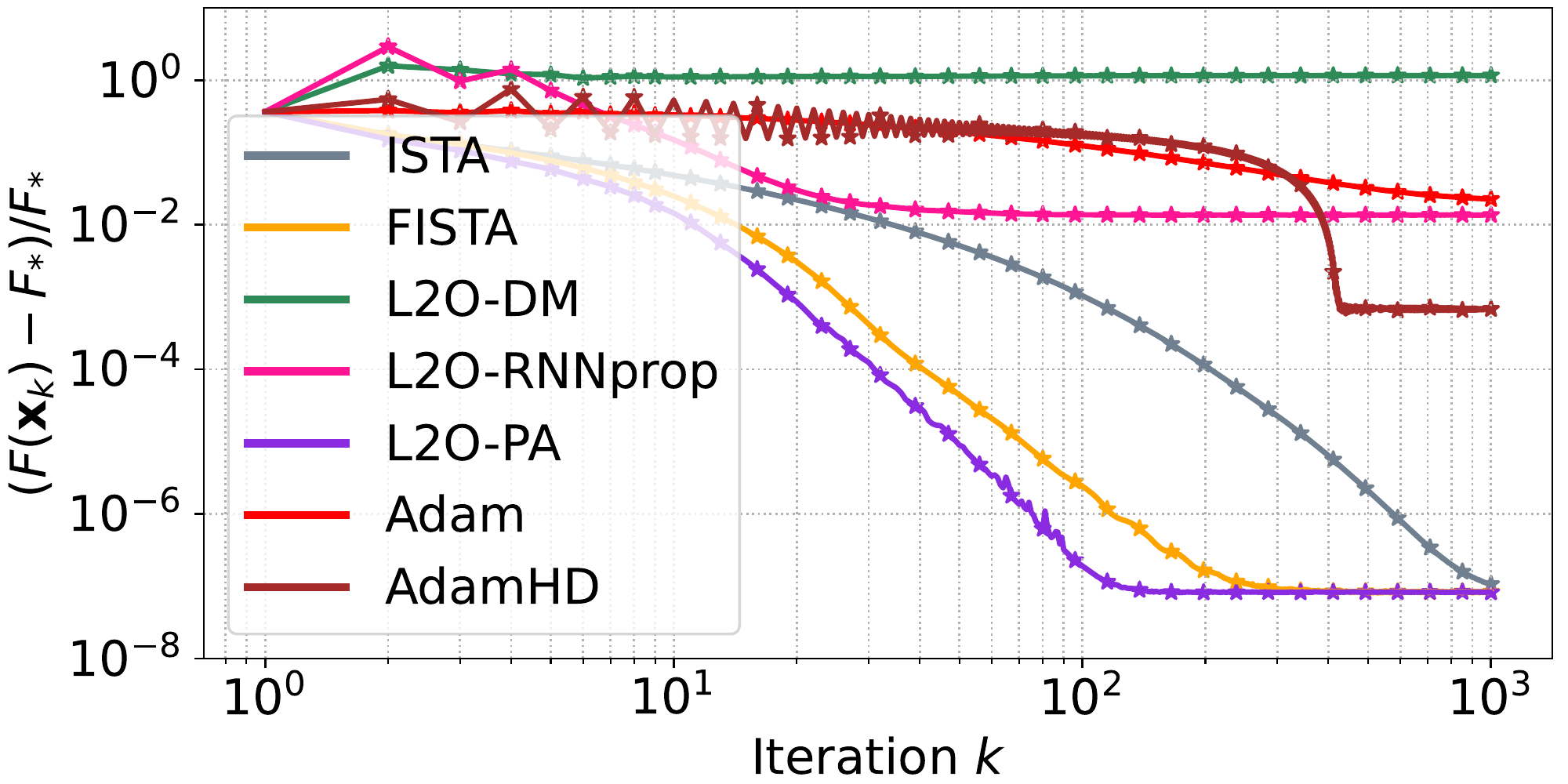}
    \caption{LASSO: Train on synthetic data and test on real data.}
    \label{fig:lasso-real}
\end{figure}

We first choose ISTA, FISTA~\citep{beck2009fast}, and Adam~\citep{kingma2014adam} as baselines since they are classical manually-designed update rules. We choose a state-of-the-art Algorithm-Unrolling method Ada-LISTA~\citep{aberdam2021ada} as another baseline since it is applicable in the settings of various $\vA$. Additionally, we choose two generic L2O methods that following \eqref{eq:rule-z} as baselines: L2O-DM \citep{andrychowicz2016learning} and L2O-RNNprop \citep{lv2017learning}. Finally, we choose AdamHD~\citep{baydin2017online}, a hyperparameter optimization (HPO) method that adaptively tunes the learning rate in Adam with online learning, as a baseline. Since Ada-LISTA is not applicable to problems that have different sizes with training samples, we only compare our method with Ada-LISTA in In-Distribution experiments.

In-Distribution results are reported in Figure~\ref{fig:lasso-benchmark} and Out-of-Distribution results are reported in Figure~\ref{fig:lasso-real}. In both of the settings, the proposed \eqref{eq:l2o-pa} performs competitively. In the OOD setting, the other two learning-based methods, L2O-DM and L2O-RNNprop, both struggle to converge with optimality tolerance $10^{-2}$, but \eqref{eq:l2o-pa} is still able to converge until touching the machine precision.

\paragraph{Solving $\ell_1$-regularized Logistic Regression.}
An $\ell_1$-regularized logistic regression for binary classification is characterized by set of training examples $\{(\va_i, b_i) \in \mathbb{R}^n\times\{0,1\}\}_{i=1}^m$ where $\va_i$ is an $n$-dimensional feature vector and $b_i$ is a binary label. To solve the $\ell_1$-regularized logistic regression problem is to find an optimal $\vx_\ast$ so that $h(\va_i^\top\vx_\ast)$ predicts $p(b_i|\va_i)$ well, where $h(c)=1/(1+e^{-c})$ is the logistic function. The exact formula of the objective function can be found in the Appendix.

We train L2O models on randomly generated logistic regression datasets for binary classification. Each dataset contains 1,000 samples that have 50 features. We evaluate the trained models in both the In-Distribution and OOD settings. Results are shown in Figures~\ref{fig:logistic-benchmark} and \ref{fig:logistic-real} respectively. Details about synthetic data generation and real-world datasets are provided in the Appendix.

Under the In-Distribution setting (Figure~\ref{fig:logistic-benchmark}), our method, i.e., L2O-PA, can converge to optimal solutions within 10 iterations, almost 100$\times$ faster than FISTA, while the other two generic L2O baselines fail to converge. When tested on OOD optimization problems (Figure~\ref{fig:logistic-real}), L2O-PA can still converge within 100 iterations, faster than FISTA by more than 10 times.


\begin{figure}
    \centering
    \includegraphics[width=\linewidth]{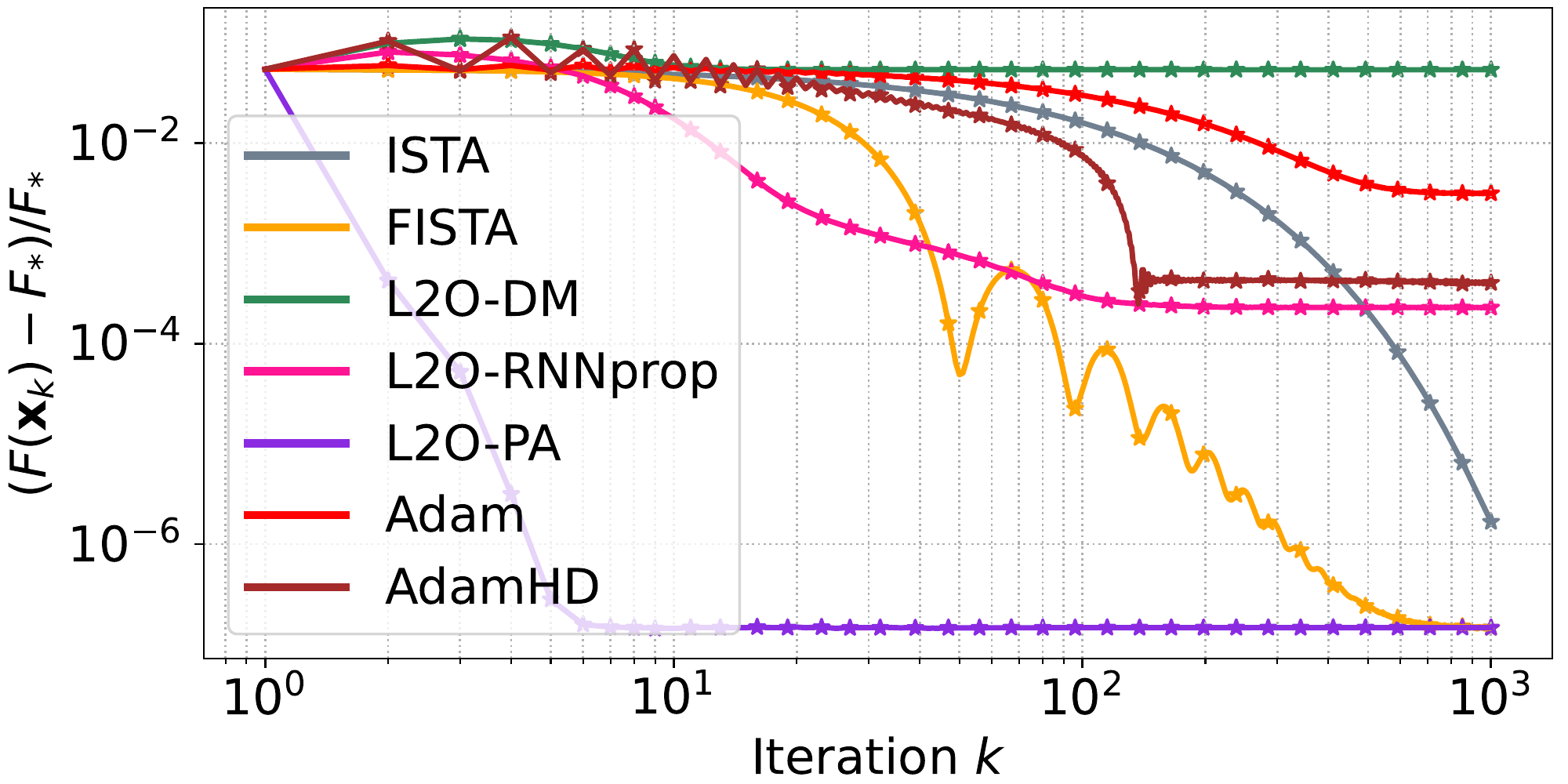}
    \caption{Logistic: Train and test on synthetic data.}
    \label{fig:logistic-benchmark}
\end{figure}

\begin{figure}
    \centering
    \includegraphics[width=\linewidth]{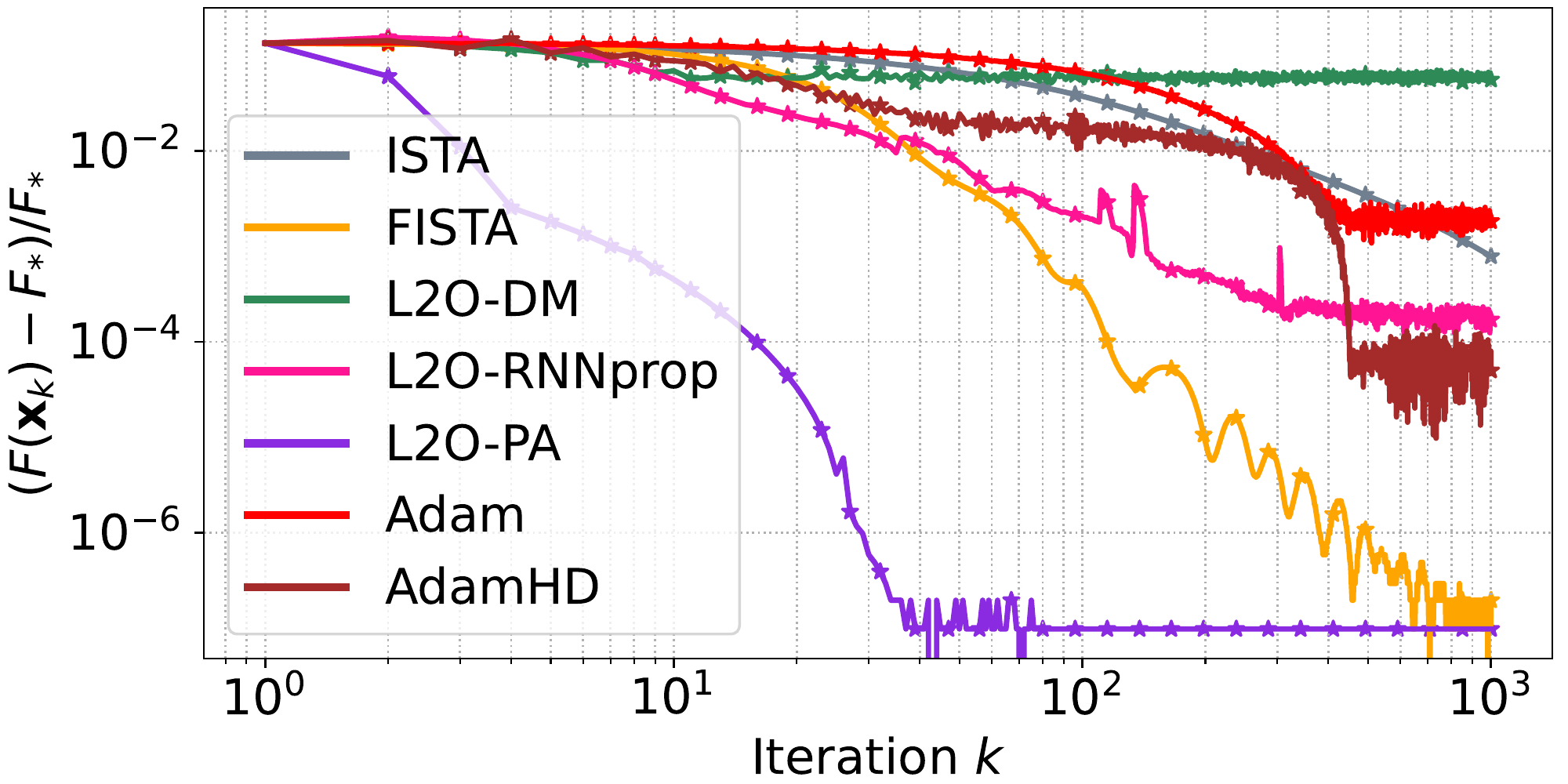}
    \caption{Logistic: Train on synthetic data and test on real data.}
    \label{fig:logistic-real}
\end{figure}

\section{Conclusions and Future Work}
\label{sec:conclusion}
By establishing the basic conditions of the update rule, we incorporate mathematical structures into machine-learning-based optimization algorithms (learned optimizers). In our settings, we do not learn the entire update rule as a black box. Instead, we propose to learn a preconditioner and an accelerator, and then construct the update rule in the style of proximal gradient descent. Our approach is applicable to a broad class of optimization problems while providing superior empirical performance. This study actually provides some insights toward answering an important question in L2O: Which part of the model should be mathematically grounded and which part could be learned?

There are several lines of future work. Firstly, relaxing the assumption of convexity and studying the nonconvex settings will be a significant future direction. 
Another interesting topic is to extend \eqref{eq:l2o-nesterov-y} and consider update rules that adopt more input information and longer horizons.

\section*{Acknowledgements}
The work of HanQin Cai was partially supported by NSF DMS 2304489.


\bibliographystyle{icml2023}
\bibliography{ref}

\newpage
\appendix
\onecolumn

\section{Proof of Theorems}
\label{sec:a}

\subsection{Preliminaries}

In our proofs, 
$\|\cA\|$ denotes the spectral norm of matrix $\cA$, $\|\cA\|_{\mathrm{F}}$ denotes the Frobenius norm of matrix $\cA$, $\|\vx\|$ denotes the $\ell_2$-norm of vector $\vx$, and $\|\vx\|_1$ denotes the $\ell_1$-norm of vector $\vx$.

Before the proofs of those theorems in the main text, we describe a lemma here to facilitate our proofs.

\begin{lemma}\label{lemma:mvt}
For any operator $\vo \in \cD_{C}(\mathbb{R}^{m\times n})$ and any $\vx_{1},\vy_{1},\vx_{2},\vy_{2},\cdots,\vx_{m},\vy_{m} \in \mathbb{R}^{n}$, there exist matrices $\vJ_{1},\vJ_{2},\cdots,\vJ_{m} \in \mathbb{R}^{n\times n}$ such that
\begin{equation}
    \label{eq:lemma-mvt}
    \vo(\vx_{1},\vx_{2},\cdots,\vx_{m}) - \vo(\vy_{1},\vy_{2},\cdots,\vy_{m}) = \sum_{j=1}^{m} \vJ_{j}(\vx_{j}-\vy_{j}),
\end{equation}
and
\begin{equation}
    \label{eq:lemma-j-bound}
    \|\vJ_{1}\| \leq \sqrt{n}C,\quad \|\vJ_{2}\| \leq \sqrt{n}C, \quad \cdots, \quad\|\vJ_{m}\| \leq \sqrt{n}C.
\end{equation}
\end{lemma}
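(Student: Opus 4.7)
The plan is to prove this mean-value-theorem-type identity in two stages: a telescoping decomposition that reduces the multi-argument problem to $m$ single-argument differences, followed by a coordinate-wise application of the scalar mean value theorem to produce each matrix $\vJ_j$. Finally, I would read off the norm bound from the assumed Frobenius bound on $\mathrm{J}\vo$.

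First I would write the telescoping identity
\begin{equation*}
\vo(\vx_1,\dots,\vx_m)-\vo(\vy_1,\dots,\vy_m)=\sum_{j=1}^m \bigl[\vo(\vx_1,\dots,\vx_{j-1},\vx_j,\vy_{j+1},\dots,\vy_m)-\vo(\vx_1,\dots,\vx_{j-1},\vy_j,\vy_{j+1},\dots,\vy_m)\bigr],
\end{equation*}
so that in the $j$-th summand only the $j$-th argument changes (from $\vy_j$ to $\vx_j$), with the other arguments frozen at some fixed vectors. This reduces the task to representing a single-argument difference as a matrix times $(\vx_j-\vy_j)$.

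Next, for the $j$-th summand I would invoke the scalar mean value theorem row-by-row. Fixing the other arguments and denoting the $i$-th component of $\vo$ by $o_i$, there exists $\xi_{i,j}$ on the line segment from $\vy_j$ to $\vx_j$ with
\begin{equation*}
o_i(\dots,\vx_j,\dots)-o_i(\dots,\vy_j,\dots)=\bigl(\nabla_{\vx_j} o_i(\dots,\xi_{i,j},\dots)\bigr)^{\!\top}(\vx_j-\vy_j).
\end{equation*}
Stacking these $n$ row identities together defines $\vJ_j\in\RR^{n\times n}$ whose $i$-th row is $\nabla_{\vx_j} o_i$ evaluated at the (possibly $i$-dependent) point $\xi_{i,j}$. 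By construction, $\sum_j \vJ_j(\vx_j-\vy_j)$ equals the left-hand side, giving \eqref{eq:lemma-mvt}.

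For the norm bound, I would note that each row of $\vJ_j$ is a sub-row of a row of the full Jacobian $\mathrm{J}\vo$ (restricted to coordinates corresponding to the $j$-th argument), evaluated at some admissible point. Since $\|\mathrm{J}\vo(\vz)\|_{\mathrm{F}}\le C$ for every $\vz$, each such row has Euclidean norm at most $C$, and hence $\|\vJ_j\|_{\mathrm{F}}^2\le \sum_{i=1}^n C^2 = nC^2$. Using $\|\vJ_j\|\le\|\vJ_j\|_{\mathrm{F}}$ yields $\|\vJ_j\|\le\sqrt{n}\,C$, which is \eqref{eq:lemma-j-bound}. No step is really an obstacle; the only subtle point is that the evaluation points $\xi_{i,j}$ may differ across rows $i$, which is harmless because the row-wise norm bound only requires that each $\xi_{i,j}$ lie in the domain, so the Frobenius bound on the Jacobian still applies row by row.
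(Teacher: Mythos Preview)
Your proof is correct. The paper's own proof differs only in how the mean value theorem is organized: instead of telescoping across the $m$ arguments, it applies the scalar MVT directly to each output component $o_i$ along the single segment from $(\vy_1,\dots,\vy_m)$ to $(\vx_1,\dots,\vx_m)$ in $\RR^{mn}$, obtaining one intermediate point $\vz_i$ per row $i$ that is shared across all $m$ blocks, rather than your $m$ row-dependent points $\xi_{i,1},\dots,\xi_{i,m}$. In both constructions the $i$-th row of $\vJ_j$ is $\partial o_i/\partial \vx_j$ evaluated at some admissible (possibly $i$-dependent) point, and the Frobenius-norm argument for $\|\vJ_j\|\le\|\vJ_j\|_{\mathrm F}\le\sqrt{n}\,C$ is identical. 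Your telescoping is a harmless detour; the paper's direct route is marginally shorter, but neither yields a sharper constant or avoids the $\sqrt{n}$ factor, which (as the paper also remarks) is unavoidable precisely because the evaluation points differ across rows.
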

\begin{proof}
Since $\vo \in \cD_{C}(\mathbb{R}^{m\times n})$, the outcome of operator $\vo$ is an $n$-dimensional vector. Now we denote the $i$-th element as $o_{i}(1\leq i \leq n)$ and write operator $\vo$ in a matrix form:
\[
\begin{aligned}
 \vo(\vx_{1},\vx_{2},\cdots,\vx_{m}) &=  \Big[  o_{1}(\vx_{1},\vx_{2},\cdots,\vx_{m}), ~\cdots,~ o_{n}(\vx_{1},\vx_{2},\cdots,\vx_{m})  \Big]^\top, \\
  \vo(\vy_{1},\vy_{2},\cdots,\vy_{m}) &= \Big[  o_{1}(\vy_{1},\vy_{2},\cdots,\vy_{m}), ~\cdots,~ o_{n}(\vy_{1},\vy_{2},\cdots,\vy_{m})  \Big]^\top .
\end{aligned}
\]
Applying the Mean Value Theorem on $o_{i}(1\leq i \leq n)$, one could obtain
\begin{equation}
    \label{eq:app-mvt-single}
    \begin{aligned}
 &~ o_{i}(\vx_{1},\vx_{2},\cdots,\vx_{m}) - o_{i}(\vy_{1},\vy_{2},\cdots,\vy_{m})  \\
 = &~ \sum_{j=1}^{m} \left\langle  \frac{\partial o_{i}}{\partial \vx_{j}} \Big( \xi_{i}\vx_{1} + (1 - \xi_{i}) \vy_{1}, \cdots , \xi_{i}\vx_{m} + (1 - \xi_{i}) \vy_{m} \Big) , \vx_{j} - \vy_{j}  \right\rangle, 
\end{aligned}
\end{equation}
for some $\xi_{i} \in (0,1)$. For simplicity, we denote
\[\vz_{i} := \Big[ \xi_{i}\vx_{1} + (1 - \xi_{i}) \vy_{1}, \xi_{i}\vx_{2} + (1 - \xi_{i}) \vy_{2}, \cdots, \xi_{i}\vx_{m} + (1 - \xi_{i}) \vy_{m} \Big]^\top,\quad \forall 1 \leq i \leq n,\]
and stack all partial derivatives into one matrix as
\[\vJ_{j} = \left[  \frac{\partial o_{1}}{\partial \vx_{j}}\big(\vz_{1}\big), ~~\frac{\partial o_{2}}{\partial \vx_{j}}\big(\vz_{2}\big), ~~\cdots, ~~\frac{\partial o_{n}}{\partial \vx_{j}}\big(\vz_{n}\big)  \right]^\top \in \mathbb{R}^{n \times n}.\]
Then one can immediately get \eqref{eq:lemma-mvt} from \eqref{eq:app-mvt-single}. The upper bound of $\|\vJ_{j}\|(1\leq j \leq m)$ can be estimated by
\[\|\vJ_{j}\|^2 \leq \|\vJ_{j}\|_{\mathrm{F}}^2 = \sum_{i=1}^{n} \left\| \frac{\partial o_{i}}{\partial \vx_{j}}\big(\vz_{i}\big) \right\|^2 \leq n C^2.\]
Therefore, it concludes that $\|\vJ_{j}\| \leq \sqrt{n}C$ for all $1 \leq j \leq m$, which finishes the proof. 

Note that such upper bound of $\|\vJ_{j}\|$ is tight. We could NOT directly conclude $\|\vJ_{j}\| \leq C$ because $\partial o_{1}/\partial \vx_{j}, \cdots, \partial o_{n}/\partial \vx_{j}$ are evaluated respectively at points $\vz_{1},\cdots,\vz_{n}$, and, consequently, the whole matrix $\vJ_{j}$ is not a Jacobian matrix of operator $\vo$.
\end{proof}

\subsection{Proof of Theorem~\ref{theo:gd}}
 \begin{proof}
Denote
 \[
 \hat{\vd}_{k} := \vd_{k}(\vx_{\ast},\mathbf{0}).
 \]
 Plugging the above equation into \eqref{eq:l2o-gd}, we obtain
 \[\vx_{k+1} = \vx_{k} - \vd_{k}(\vx_{k},\nabla f(\vx_{k})) + \vd_{k}(\vx_{\ast},\mathbf{0}) -  \hat{\vd}_{k}.\]
 Applying Lemma~\ref{lemma:mvt}, we have
 \[\vx_{k+1} = \vx_{k} - \vJ_{1,k}(\vx_{k}-\vx_{\ast}) - \vJ_{2,k}\nabla f(\vx_{k}) -  \hat{\vd}_{k},\]
 for some $\vJ_{1,k},\vJ_{2,k} \in \mathbb{R}^{n\times n}$ that satisfy
 \[\|\vJ_{1,k}\| \leq \sqrt{n}C, \quad \|\vJ_{2,k}\| \leq \sqrt{n}C.\]
  Define 
 \[
 \vP_{k} :=\vJ_{2,k},\quad
 \vb_{k} :=\vJ_{1,k}(\vx_{k}-\vx_{\ast}) + \hat{\vd}_{k}.
 \]
 Then we obtain $\vd_{k}(\vx_{k},\nabla f(\vx_{k})) = \vP_{k}\nabla f(\vx_{k}) + \vb_{k}$ and it holds that
 \[
 \begin{aligned}
 \big\| \vP_{k} \big\| &\leq \sqrt{n} C,\\
 \big\| \vb_{k} \big\| &\leq \sqrt{n} C \big\| \vx_{k} -\vx_{\ast}\big\| + \big\| \hat{\vd}_{k} \big\|.
 \end{aligned}
 \]
Assumption \eqref{eq:fp1} leads to $\| \hat{\vd}_{k} \| \to 0$ and Assumption \eqref{eq:gc1} leads to $\| \vx_{k} -\vx_{\ast} \|\to 0$. Consequently, $\| \vb_{k} \| \to 0$, which finishes the proof.
 \end{proof}

\subsection{Proof of Theorem~\ref{theo:ppa}}
\begin{proof}
Following the same proof line with that of Theorem~\ref{theo:gd}, we first denote $\hat{\vd}_{k} := \vd_{k}(\vx_{\ast},\mathbf{0})$ and then obtain 
\[
\begin{aligned}
 \vx_{k+1} &= \vx_{k} - \vd_{k}(\vx_{k+1},\vg_{k+1}) + \vd_{k}(\vx_{\ast},\mathbf{0}) -  \hat{\vd}_{k}\\
 &= \vx_{k} - \vJ_{1,k}(\vx_{k+1}-\vx_{\ast}) - \vJ_{2,k}\vg_{k+1} -  \hat{\vd}_{k},\\
 &= \vx_{k} - \big( \underbrace{\vJ_{2,k}}_{\vP_{k}}\vg_{k+1} + \underbrace{\vJ_{1,k}(\vx_{k+1}-\vx_{\ast}) + \hat{\vd}_{k}}_{\vb_{k}} \big),
\end{aligned}
\]
where $\vg_{k+1} \in \partial r(\vx_{k+1})$, $\vP_{k}$ is bounded, and $\vb_{k}\to \mathbf{0}$ as $k\to\infty$. In another word, $\vx_{k+1}$ satisfies
\begin{equation}
\label{eq:app-ppa}
    \vx_{k} - \vb_{k} \in \vx_{k+1} + \vP_{k}\partial r(\vx_{k+1}).
\end{equation}
Note that $\vP_{k}$ and $\vb_{k}$ may depend on $\vx_{k+1}$, but it does not hurt our conclusion: For any operator sequence $\{\vd_{k}\}$ that satisfies \eqref{eq:fp2} and any sequence $\{\vx_{k}\}$ that is generated by \eqref{eq:l2o-ppa} and satisfies the Global Convergence, there must exist $\{\vP_{k},\vb_{k}\}$ such that \eqref{eq:app-ppa} holds for all $k$.

Meanwhile, since $\vP_{k}$ is assumed to be symmetric positive definite, function $\hat{F}_{k}(\vx) = f(\vx) + (1/2)\|\vx - \vx_{k} + \vb_{k}\|^2_{\vP_{k}^{-1}}$ is strongly convex. Therefore, $\vx$ is the unique minimizer of $\hat{F}_{k}$ if and only if:
\[\mathbf{0} \in \partial r(\vx) + \vP_{k}^{-1}(\vx - \vx_{k} + \vb_{k}).\]
With reorganization, the above condition is equivalent with
\[\vx_{k} - \vb_{k} \in \vx + \vP_{k}\partial r(\vx),\]which is exactly the condition \eqref{eq:app-ppa} that $\vx_{k+1}$ satisfies. Thus, $\vx_{k+1}$ is the unique minimizer of $\hat{F}_{k}(\vx)$ and is the unique point satisfying \eqref{eq:app-ppa}, which finishes the proof.
\end{proof}

\subsection{Proof of Theorem~\ref{theo:fbs}}

\begin{proof}
 Denote
 \[
 \hat{\vd}_{k} := \vd_{k}(\vx_{\ast},\nabla f(\vx_{\ast}), \vx_{\ast}, -\nabla f(\vx_{\ast})) .
 \]
 Plugging the above equation into \eqref{eq:l2o-fbs}, we obtain
 \[\vx_{k+1} = \vx_{k} - \vd_{k}(\vx_{k},\nabla f(\vx_{k}),\vx_{k+1},\vg_{k+1}) + \vd_{k}(\vx_{\ast},\nabla f(\vx_{\ast}), \vx_{\ast}, -\nabla f(\vx_{\ast})) - \hat{\vd}_{k}.\]
Applying Lemma~\ref{lemma:mvt}, we obtain
 \[
 \begin{aligned}
  \vx_{k+1} = ~\vx_{k} & - \vJ_{1,k}(\vx_{k}-\vx_{\ast}) - \vJ_{2,k}(\vx_{k+1}-\vx_{\ast}) \\
  & - \vJ_{3,k}(\nabla f(\vx_{k}) - \nabla f(\vx_{\ast})) - \vJ_{4,k}(\vg_{k+1} + \nabla f(\vx_{\ast})) - \hat{\vd}_{k}
 \end{aligned}
 \]
 for some $\vJ_{1,k},\vJ_{2,k},\vJ_{3,k},\vJ_{4,k} \in \mathbb{R}^{n\times n}$ that satisfy
 \[\|\vJ_{j,k}\| \leq \sqrt{n}C, \quad \forall j = 1,2,3,4.\]
 Reorganizing the above equation, we have
 \[
 \begin{aligned}
  \vx_{k+1} = ~ \vx_{k} & - \vJ_{1,k}(\vx_{k}-\vx_{\ast}) - \vJ_{2,k}(\vx_{k+1}-\vx_{\ast}) \\
  & - (\vJ_{3,k}-\vJ_{4,k})(\nabla f(\vx_{k}) - \nabla f(\vx_{\ast})) - \vJ_{4,k}(\vg_{k+1} + \nabla f(\vx_{k}) ) - \hat{\vd}_{k}.
 \end{aligned}
 \]
 With 
 \[
 \begin{aligned}
  \vP_{k} &:= \vJ_{4,k},\\
  \vb_{k} &:= \vJ_{1,k}(\vx_{k}-\vx_{\ast}) + \vJ_{2,k}(\vx_{k+1}-\vx_{\ast}) + (\vJ_{3,k}-\vJ_{4,k})(\nabla f(\vx_{k}) - \nabla f(\vx_{\ast})) + \hat{\vd}_{k},
 \end{aligned}
 \]
 we have 
 \[\vx_{k+1} = \vx_{k} - \vP_{k}(\nabla f(\vx_{k}) + \vg_{k+1}) - \vb_{k},\quad \vg_{k+1} \in \partial r(\vx_{k+1}),\]
 and
 \[
 \begin{aligned}
  \big\| \vP_{k} \big\| &\leq  \sqrt{n} C,\\
 \big\| \vb_{k} \big\| &\leq \sqrt{n} C \big\| \vx_{k} -\vx_{\ast}\big\| + \sqrt{n} C \big\| \vx_{k+1} -\vx_{\ast}\big\| + 2\sqrt{n}C \big\| \nabla f(\vx_{k}) - \nabla f(\vx_{\ast}) \big\| + \big\| \hat{\vd}_{k} \big\|.
 \end{aligned}
 \]
 The smoothness of $f$ implies $\nabla f(\vx_{k}) - \nabla f(\vx_{\ast}) \to \mathbf{0}$. Consequently, we conclude with $ \vb_{k} \to \mathbf{0}$ and this finishes the proof of the first part in Theorem~\ref{theo:fbs}.
 

Now it is enough to prove that the following inclusion equation of $\vx$ has a unique solution and is equivalent with \eqref{eq:l2o-fbs-quadratic}.
\begin{equation}
    \label{eq:app-fbs-inclusion}
    \vx = \vx_{k} - \vP_{k}(\nabla f(\vx_{k}) + \vg) - \vb_{k},\quad \vg \in \partial r(\vx).
\end{equation}
 The above equation is equivalent with
 \[\vx \in \vx_{k} - \vP_{k}(\nabla f(\vx_{k}) + \partial r(\vx)) - \vb_{k}.\]
 Since $\vP_{k}$ is assumed to be symmetric positive definite, one could obtain another equivalent form with reorganization:
 \[\mathbf{0} \in \partial r(\vx) + \vP_{k}^{-1}(\vx - \vx_{k} + \vP_{k}\nabla f(\vx_{k}) + \vb_{k}).\]
 Thanks to $f \in \cF_{L}(\mathbb{R}^n)$ and $r \in \cF(\mathbb{R}^n)$, the above equation has an unique solution $\vx^{+}$ that yields
 \[\vx^{+} = \argmin_{\vx} r(\vx) + \frac{1}{2}\left\|\vx - \vx_{k} + \vP_{k}\nabla f(\vx_{k}) + \vb_{k}\right\|^2_{\vP_{k}^{-1}} = \mathrm{prox}_{r,\vP_{k}}\left(\vx_{k} - \vP_{k}\nabla f(\vx_{k}) - \vb_{k}\right).\]
 Since $\vx_{k+1}$ satisfies \eqref{eq:app-fbs-inclusion}, we conclude that $\vx_{k+1} = \vx^{+}$ and finish the whole proof.
 \end{proof}

\subsection{Proof of Theorem~\ref{theo:nesterov}}
\paragraph{Step 1: Analyzing (\ref{eq:l2o-nesterov-x}).} Denote \[\hat{\vd}_{k}:=\vd_{k}(\vx_{\ast},\nabla f(\vx_{\ast}),\vx_{\ast},-\nabla f(\vx_{\ast}),\vx_{\ast},\nabla f(\vx_{\ast})).\]
Then \eqref{eq:l2o-nesterov-x} can be written as
\[
\begin{aligned}
 \vx_{k+1} = \vx_{k} & - \vd_{k}(\vx_{k},\nabla f(\vx_{k}),\vx_{k+1},\vg_{k+1},\vy_{k},\nabla f(\vy_{k}))\\
& + \vd_{k}(\vx_{\ast},\nabla f(\vx_{\ast}),\vx_{\ast},-\nabla f(\vx_{\ast}),\vx_{\ast},\nabla f(\vx_{\ast})) - \hat{\vd}_{k},
\end{aligned}
\]
where $\vg_{k+1} \in \partial r(\vx_{k+1})$. Applying Lemma~\ref{lemma:mvt}, we have
\[
    \begin{aligned}
 \vx_{k+1} = \vx_{k} & - \vJ_{1,k}(\vx_{k}-\vx_\ast) - \vJ_{2,k}(\vx_{k+1}-\vx_{\ast}) - \vJ_{3,k}(\vy_{k}-\vx_{\ast})- \hat{\vd}_{k}\\
& - \vJ_{4,k}(\nabla f(\vx_{k}) - \nabla f(\vx_{\ast}) ) - \vJ_{5,k}(\vg_{k+1} + \nabla f(\vx_{\ast})) - \vJ_{6,k}(\nabla f(\vy_{k}) - \nabla f(\vx_{\ast})),
\end{aligned}
\]
where matrices $\vJ_{j,k}(1 \leq j \leq 6)$ satisfy
\[\|\vJ_{j,k}\| \leq \sqrt{n}C, \quad \forall j=1,2,3,4,5,6.\]
Then we do some calculations and get
\[\begin{aligned}
 \vx_{k+1} = \vx_{k} & - \vJ_{1,k}(\vx_{k}-\vx_\ast) - \vJ_{2,k}(\vx_{k+1}-\vx_{\ast}) - \vJ_{3,k}(\vy_{k}-\vx_{\ast})- \hat{\vd}_{k}\\
& - (\vJ_{4,k} - \vJ_{5,k} + \vJ_{6,k} )(\nabla f(\vx_{k}) - \nabla f(\vx_{\ast}) ) - (\vJ_{5,k} - \vJ_{6,k})(\nabla f(\vx_{k}) - \nabla f(\vx_{\ast}) )\\
& - \vJ_{5,k}(\vg_{k+1} + \nabla f(\vx_{\ast})) - \vJ_{6,k}(\nabla f(\vy_{k}) - \nabla f(\vx_{\ast}))\\
 = \vx_{k} & - \vJ_{1,k}(\vx_{k}-\vx_\ast) - \vJ_{2,k}(\vx_{k+1}-\vx_{\ast}) - \vJ_{3,k}(\vy_{k}-\vx_{\ast})- \hat{\vd}_{k}\\
& - (\vJ_{4,k} - \vJ_{5,k} + \vJ_{6,k} )(\nabla f(\vx_{k}) - \nabla f(\vx_{\ast}) ) \\
& - (\vJ_{5,k} - \vJ_{6,k}) \nabla f(\vx_{k}) - \vJ_{5,k}~ \vg_{k+1} - \vJ_{6,k} \nabla f(\vy_{k}).
\end{aligned}\]
Given any $\vB_{k} \in \mathbb{R}^{n \times n}$, we define:
\[
\begin{aligned}
 \vP_{1,k} &:= \vJ_{5,k},\\
 \vP_{2,k} &:= \vJ_{6,k},\\
 \vb_{1,k} &:= \vJ_{1,k}(\vx_{k}-\vx_\ast) + \vJ_{2,k}(\vx_{k+1}-\vx_{\ast}) + \vJ_{3,k}(\vy_{k}-\vx_{\ast}) + \hat{\vd}_{k}\\
 &\quad + (\vJ_{4,k} - \vJ_{5,k} + \vJ_{6,k} )(\nabla f(\vx_{k}) - \nabla f(\vx_{\ast}) ) + \vB_{k}(\vy_{k} - \vx_{k}). 
\end{aligned}
\]
Then we have
\[\vx_{k+1} = ~ \vx_{k} - (\vP_{1,k}-\vP_{2,k})\nabla f(\vx_{k}) - \vP_{2,k}\nabla f(\vy_{k}) - \vP_{1,k} ~\vg_{k+1} + \vB_{k}(\vy_{k} - \vx_{k}) - \vb_{1,k},\]
which immediately leads to \eqref{eq:l2o-nesterov-d}. The upper bounds of $\vJ_{j,k}(1 \leq j \leq 6)$ imply that $\vP_{1,k},\vP_{2,k}$ are bounded:
\[\|\vP_{1,k}\| \leq \sqrt{n}C,\quad \|\vP_{2,k}\| \leq \sqrt{n}C,\]
and $\vb_{1,k}$ is controlled by
\[
\begin{aligned}
 \|\vb_{1,k}\| \leq &~ \sqrt{n}C \Big(\|\vx_{k}-\vx_{\ast}\| + \|\vx_{k+1}-\vx_{\ast}\| + \|\vy_{k}-\vx_{\ast}\| \Big) + \|\hat{\vd}_{k}\|\\ & + 3\sqrt{n}C\|\nabla f(\vx_{k}) - \nabla f(\vx_{\ast})\| + \|\vB_{k}\|\|\vy_{k}-\vx_{k}\|.
\end{aligned}
\]
Assumption \eqref{eq:gc4} implies that 
\[\|\vx_{k}-\vx_{\ast}\| \to 0,~~ \|\vx_{k+1}-\vx_{\ast}\|\to 0,~~ \|\vy_{k}-\vx_{\ast}\|\to 0,~~\|\vy_{k}-\vx_{k}\| \to 0,\]and Assumption \eqref{eq:fp4} implies that $\|\hat{\vd}_{k}\| \to 0$.
The smoothness of $f$ implies $\|\nabla f(\vx_{k}) - \nabla f(\vx_{\ast}) \| \to 0$. In the theorem statement, we assume that $\{\vB_{k}\}$ could be any bounded matrix sequence. Therefore, it concludes that $\|\vb_{1,k}\| \to 0$ as $k \to \infty$.

\paragraph{Step 2: Analyzing (\ref{eq:l2o-nesterov-y}).} Since $T=1$, equation \eqref{eq:l2o-nesterov-y} reduces to \[\vy_{k+1} = \vm(\vx_{k+1},\vx_{k}).\] 
Due to Assumption \eqref{eq:fp4}, $\vx_{\ast} = \vm(\vx_{\ast},\vx_{\ast})$, equation \eqref{eq:l2o-nesterov-y} is equivalent to 
\[\vy_{k+1} = \vm(\vx_{k+1},\vx_{k}) - \vm(\vx_{\ast},\vx_{\ast}) + \vx_{\ast}.\]
Then one could apply Lemma~\ref{lemma:mvt} and obtain
\[
    \vy_{k+1} = \vJ_{7,k}(\vx_{k+1}-\vx_{\ast}) + \vJ_{8,k}(\vx_{k}-\vx_{\ast}) + \vx_{\ast},
\]
where matrices $\vJ_{7,k}$ and $\vJ_{8,k}$ satisfy \[\|\vJ_{7,k}\| \leq \sqrt{n}C, \quad \|\vJ_{8,k}\| \leq \sqrt{n}C.\]
With calculation, we get
\[
\begin{aligned}
 \vy_{k+1} &= \vJ_{7,k}\vx_{k+1} + \vJ_{8,k}\vx_{k} + (\vI - \vJ_{7,k} - \vJ_{8,k})\vx_{\ast}\\
 &= (\vI - \vJ_{8,k})\vx_{k+1} + \vJ_{8,k}\vx_{k} + (\vI - \vJ_{7,k} - \vJ_{8,k})(\vx_{k+1}-\vx_{\ast})
\end{aligned}
\]
With 
\[\vA_{k}: = \vJ_{8,k},\quad \vb_{2,k}:= (\vI - \vJ_{7,k} - \vJ_{8,k})(\vx_{k+1}-\vx_{\ast}),\]one can immediately obtain \eqref{eq:l2o-nesterov-m} the following bounds
\[
\begin{aligned}
 \|\vA_{k}\| &\leq  \sqrt{n}C\\
 \|\vb_{2,k}\| &\leq  \|\vI - \vJ_{7,k} - \vJ_{8,k}\| \|\vx_{k+1}-\vx_{\ast}\| \leq \big( 1+2\sqrt{n}C \big)\|\vx_{k+1}-\vx_{\ast}\| \to 0.
\end{aligned}
\]

\paragraph{Step 3: Proof of (\ref{eq:l2o-nesterov-quadratic}).} To prove \eqref{eq:l2o-nesterov-quadratic}, we assume sequence $\{\vP_{1,k}\}$ is uniformly symmetric positive definite, i.e., the smallest eigenvalues of symmetric positive definite $\{\vP_{1,k}\}$ are uniformly bounded away from zero. Thus, the matrix sequence $\vP_{1,k}^{-1}\vP_{2,k}$ is bounded. Since equation \eqref{eq:l2o-nesterov-d} holds for all bounded matrix sequence $\vB_{k}$, we let 
\[
\vB_{k}:= \vP_{2,k}\vP_{1,k}^{-1},
\]
and obtain
\[
\vx_{k+1} = \vx_{k} - (\vP_{1,k}-\vP_{2,k})\nabla f(\vx_{k}) - \vP_{2,k}\nabla f(\vy_{k}) - \vP_{1,k} \vg_{k+1} + \vP_{2,k}\vP_{1,k}^{-1}(\vy_{k} - \vx_{k}) - \vb_{1,k}.
\]
Therefore, it holds that
\begin{align}
 \vx_{k+1} + \vP_{1,k} \vg_{k+1} &= \vx_{k} - (\vP_{1,k}-\vP_{2,k})\nabla f(\vx_{k}) - \vP_{2,k}\nabla f(\vy_{k}) + \vP_{1,k}^{-1}\vP_{2,k}(\vy_{k} - \vx_{k}) - \vb_{1,k}\\
&= \Big( \vI - \vP_{2,k}\vP_{1,k}^{-1} \Big) \Big(\vx_{k} - \vP_{1,k}\nabla f(\vx_{k}) \Big) + \vP_{2,k}\vP_{1,k}^{-1} \Big(\vy_{k} - \vP_{1,k}\nabla f(\vy_{k})\Big) - \vb_{1,k}\\
&= \Big( \vI - \vP_{2,k}\vP_{1,k}^{-1} \Big) \hat{\vx}_{k} + \vP_{2,k}\vP_{1,k}^{-1} \hat{\vy}_{k} - \vb_{1,k}\\
&= \Big( \vI - \vB_{k} \Big) \hat{\vx}_{k} + \vB_{k} \hat{\vy}_{k} - \vb_{1,k}.
\end{align}
Since $\vg_{k+1} \in \partial r(\vx_{k+1})$, we have $\vx_{k+1}$ yields the following inclusion equation
\[\mathbf{0} \in \partial r(\vx) + \vP_{1,k}^{-1} \left(\vx -  \big( \vI - \vB_{k} \big) \hat{\vx}_{k} - \vB_{k} \hat{\vy}_{k} + \vb_{1,k} \right).\]
Consequently, $\vx_{k+1}$ is the unique minimizer of the following convex optimization problem
\begin{equation}
\label{eq:app-prox-opt}
    \min_{\vx} r(\vx) + \frac{1}{2}\left\| \vx - \big( \vI - \vB_{k} \big) \hat{\vx}_{k} - \vB_{k} \hat{\vy}_{k} + \vb_{1,k} \right\|^2_{\vP_{1,k}^{-1}}.
\end{equation}
Applying the definition of preconditioned proximal operator \eqref{eq:define-prox}, one could immediately get \eqref{eq:l2o-nesterov-quadratic}. The strong convexity of \eqref{eq:app-prox-opt} implies that \eqref{eq:app-prox-opt} admits a unique minimizer, which concludes the uniqueness of $\vx_{k+1}$ and finishes the whole proof.

\section{Other Theoretical Results}
\label{sec:b}
In this section, we study the explicit update rule \eqref{eq:l2o-subgd} in the non-smooth case. To facilitate reading, we rewrite \eqref{eq:l2o-subgd} here:
\begin{equation}
    \label{eq:l2o-subgd-app}
    \vx_{k+1} = \vx_{k} - \vd_{k}(\vx_{k},\vg_{k}),\quad \vg_{k} \in \partial r(\vx_{k}).
\end{equation}
We show that, even for some simple functions, one may not expect to obtain an efficient update rule if $\vd_{k}\in\cD_{C}(\mathbb{R}^n\times\mathbb{R}^n)$. The one-dimensional case is presented in Proposition~\ref{prop:subgra} and the n-dimensional case is presented in Proposition~\ref{prop:subgra2}.

In the one-dimensional case, we consider function $r(x) = |x|$. It has unique minimizer $x_{\ast}=0$. Its subdifferential is:
\begin{equation}
    \label{eq:abs-sub-gra}
    \partial r(x) = \begin{cases}
    \mathrm{sign}(x), & x \neq 0;\\
    [-1,1], & x = 0.
    \end{cases}
\end{equation}
Since $x_{\ast}=0$, the asymptotic fixed point condition is $d_{k}(0,0) \to 0$. Furthermore, we assume all sequences generated by \eqref{eq:l2o-subgd-app} with initial points $x_{0} \in [-1,1]$ converges to $0$ \emph{uniformly}. In another word, there is a uniform convergence rate for all possible sequences. Due to the uniqueness of minimizer, one may expect a good update rule satisfy such uniform convergence.

\begin{proposition}
\label{prop:subgra}
Consider 1-D function $r(x) = |x|$. Suppose we pick $d_{k}$ from $\cD_{C}(\mathbb{R})$ and form a operator sequence $\{d_{k}\}_{k=0}^{\infty}$. If we assume:
 \begin{itemize}[leftmargin=*]
     \item It holds that $d_{k}(0,0) \to 0$ as $k \to \infty$.
     \item Any sequences $\{x_{k}\}$ generated by \eqref{eq:l2o-subgd-app} converges to $0$ \emph{uniformly} for all initial points $x_{0} \in [-1,1]$.
 \end{itemize}
 then there exist $\{p_{k},b_{k}\}_{k=0}^{\infty}$ satisfying
\[d_{k}(x_{k},g_{k} ) = p_{k}g_{k} + b_{k},\quad g_{k} \in \partial r(x_{k}),\quad \text{for all }k=0,1,2,\cdots,\]
$p_{k}\to0$, and $b_{k}\to 0$ as $k\to\infty$.
 \end{proposition}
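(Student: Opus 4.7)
The plan is to construct the decomposition $d_k(x_k,g_k)=p_k g_k + b_k$ by a case split on whether $x_k$ equals zero, and to read off both $p_k \to 0$ and $b_k \to 0$ from the uniform convergence $|x_k|\to 0$ combined with the Lipschitz bound inherited from $d_k\in\cD_{C}(\mathbb{R})$ (in one dimension, $|\partial_x d_k|,|\partial_g d_k|\leq C$) and the hypothesis $d_k(0,0)\to 0$. The key auxiliary observation is that uniform convergence gives $|x_{k+1}|\to 0$ as well, so that the update magnitude $|d_k(x_k,g_k)|=|x_k-x_{k+1}|\leq|x_k|+|x_{k+1}|$ vanishes along any trajectory starting in $[-1,1]$.

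\textbf{Case $x_k\neq 0$.} Here $g_k=\operatorname{sign}(x_k)\in\{-1,+1\}$ is uniquely determined. I set
\[
b_k := d_k(x_k,0), \qquad p_k := g_k\bigl(d_k(x_k,g_k)-d_k(x_k,0)\bigr).
\]
The identity $p_k g_k + b_k = d_k(x_k,g_k)$ holds because $g_k^2=1$. Writing $b_k=(d_k(x_k,0)-d_k(0,0))+d_k(0,0)$ and applying the mean value theorem to the first difference gives $|b_k|\leq C|x_k|+|d_k(0,0)|\to 0$. For $p_k$, the triangle inequality yields
\[
|p_k|\leq |d_k(x_k,g_k)|+|d_k(x_k,0)| \leq (|x_k|+|x_{k+1}|) + C|x_k| + |d_k(0,0)|,
\]
which also tends to $0$ by the same three ingredients.

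\textbf{Case $x_k=0$.} Here $g_k\in[-1,1]$ is not unique, and in particular may be arbitrarily small, which rules out the naive attempt to recover $p_k$ by dividing the equation by $g_k$. I therefore simply set $p_k:=0$ and $b_k:=d_k(0,g_k)=-x_{k+1}$. The identity is automatic, and $|b_k|=|x_{k+1}|\to 0$ by uniform convergence.

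\textbf{Main obstacle.} The delicate point is the conclusion $p_k\to 0$, which is strictly stronger than the "bounded $\vP_k$" conclusion of Theorem~\ref{theo:gd}. It is precisely the \emph{uniform} convergence assumption (yielding $|x_{k+1}|\to 0$ along with $|x_k|\to 0$) that forces this vanishing. Heuristically, any $p_k$ bounded away from $0$ would produce a jump of size roughly $2|p_k|$ in $d_k(x_k,\cdot)$ as $g_k$ flips between $+1$ and $-1$ when $x_k$ crosses zero, generating oscillations that violate uniform convergence. Thus the proposition formalizes the classical lore that explicit subgradient schemes must use diminishing step sizes, motivating the implicit (proximal) formulation adopted in Theorem~\ref{theo:ppa}.
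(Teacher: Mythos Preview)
Your argument is correct and takes a genuinely different route from the paper. The paper first obtains a decomposition via Lemma~\ref{lemma:mvt} (the multivariate mean value theorem), which yields $p_k$ as a partial derivative of $d_k$ at an intermediate point, and then proves $p_k\to 0$ by contradiction: assuming $|p_{k_l}|>\varepsilon$ along a subsequence, it constructs, for each $l$, an initial point whose trajectory avoids $0$ up to time $k_l$ (by excluding a finite set of bad starting values), so that $|g_{k_l}|=1$ and hence $|x_{k_l+1}-x_{k_l}|\geq\varepsilon-|b_{k_l}|$, contradicting uniform convergence. You instead build the decomposition by hand via the case split $b_k=d_k(x_k,0)$, $p_k=g_k\bigl(d_k(x_k,g_k)-d_k(x_k,0)\bigr)$ and bound $|p_k|$ directly through the identity $d_k(x_k,g_k)=x_k-x_{k+1}$, bypassing the initial-point construction entirely. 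One observation: your proof actually uses only \emph{pointwise} convergence of the given trajectory ($x_k\to0$ trivially gives $x_{k+1}\to0$ for the same sequence), so the ``uniform'' hypothesis you highlight in the ``main obstacle'' paragraph is not in fact what drives your bound---your argument is stronger than you advertise. The heuristic you sketch at the end (sign flips producing oscillations) is precisely the mechanism the paper formalizes, but your own decomposition never needs it.
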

 
This proposition demonstrates that if $r(x) = |x|$, any update rule in the form of \eqref{eq:l2o-subgd-app} actually equals to subgradient descent method with adaptive step size $p_{k}$ and bias $b_{k}$. The step size $p_{k}$ must be diminishing, otherwise, the uniform convergence would be broken. Diminishing step size usually leads to a slower convergence rate than constant step size. Thus, one may not expect to obtain an efficient update rule in this case.

In the n-dimensional case, we consider a family of $n$-dim function 
\[\cF_{\ell_1}(\mathbb{R}^n) = \left\{\|\vA \vx\|_1: \vA \in \mathbb{R}^{n\times n}\textnormal{, }\|\vA\| \leq 1\textnormal{, and $\vA$ is non-singular}\right\}.\] 
All functions in $\cF_{\ell_1}(\mathbb{R}^n)$ have a unique minimizer $\vx_{\ast}=\mathbf{0}$. Its subdifferential is defined as
\[\partial r(\vx) = \vA^\top \partial \|\vA \vx\|_1.\]
If every element of $\vA \vx$ is non-zero, it holds that
\begin{equation}
    \label{eq:abs2-sub-gra}
    \partial r(\vx) = \vA^\top \mathrm{sign}(\vA \vx).
\end{equation}
As an extension to Proposition~\ref{prop:subgra}, the asymptotic fixed point condition becomes $d_{k}(\mathbf{0},\mathbf{0}) \to \mathbf{0}$. We also assume that all sequences generated by \eqref{eq:l2o-subgd-app} converge to $\mathbf{0}$ \emph{uniformly} for all possible functions $r(\vx)\in \cF_{\ell_1}(\mathbb{R}^n)$ due to these function share the same minimizer.

 \begin{proposition}
\label{prop:subgra2}
Suppose we pick $\vd_{k}$ from $\cD_{C}(\mathbb{R}^n\times\mathbb{R}^n)$ and form an operator sequence $\{\vd_{k}\}_{k=0}^{\infty}$. If  we assume: 
 \begin{itemize}[leftmargin=*]
     \item It holds that 
     $d_{k}(\mathbf{0},\mathbf{0}) \to \mathbf{0}$.
     \item Any sequences $\{\vx_{k}\}$ generated by \eqref{eq:l2o-subgd-app} converges to $\mathbf{0}$ \emph{uniformly} for all $r(\vx)\in \cF_{\ell_1}(\mathbb{R}^n)$ and all initial points $\vx_{0} \in [-1,1]^n$. 
 \end{itemize}
 then there exist $\{\vP_{k},\vb_{k}\}_{k=0}^{\infty}$ satisfying
\[\vd_{k}(\vx_{k},\vg_{k} ) = \vP_{k}\vg_{k} + \vb_{k},\quad \vg_{k} \in \partial r(\vx_{k}),\quad \text{for all }k=0,1,2,\cdots,\]
where $\vP_{k}\to \mathbf{0}$ and $\vb_{k}\to \mathbf{0}$ as $k\to\infty$.
 \end{proposition}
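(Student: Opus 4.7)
The plan is to mirror the template of Theorem~\ref{theo:gd}, adapted to the non-smooth setting and exploiting the richer family $\cF_{\ell_1}(\mathbb{R}^n)$. Set $\hat{\vd}_k := \vd_k(\mathbf{0}, \mathbf{0})$; by hypothesis $\hat{\vd}_k \to \mathbf{0}$. Applying Lemma~\ref{lemma:mvt} at the endpoints $((\vx_k, \vg_k), (\mathbf{0}, \mathbf{0}))$ produces matrices $\vJ_{1,k}, \vJ_{2,k}$ of operator norm at most $\sqrt{n}C$ with $\vd_k(\vx_k, \vg_k) = \vJ_{1,k}\vx_k + \vJ_{2,k}\vg_k + \hat{\vd}_k$. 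Setting $\vP_k := \vJ_{2,k}$ and $\vb_k := \vJ_{1,k}\vx_k + \hat{\vd}_k$ yields the required representation, and $\|\vb_k\| \leq \sqrt{n}C\|\vx_k\| + \|\hat{\vd}_k\|$ combined with the uniform convergence $\|\vx_k\| \to 0$ gives $\vb_k \to \mathbf{0}$ immediately.

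The substantive new content is $\vP_k \to \mathbf{0}$, a step-size-must-vanish phenomenon analogous to the one-dimensional Proposition~\ref{prop:subgra} but driven here by the freedom to vary $\vA$. I plan to argue by contradiction: suppose $\|\vP_{k_j}\| \geq \delta > 0$ along a subsequence, and choose unit vectors $\vu_j$ with $\|\vP_{k_j}\vu_j\| \geq \delta$. For each $j$, pick an orthogonal matrix $\vA_j$ satisfying $\vA_j^\top \ve_1 = \vu_j$, so that $r_j(\vx) := \|\vA_j\vx\|_1 \in \cF_{\ell_1}(\mathbb{R}^n)$ and $\vu_j = \vA_j^\top \ve_1 \in \vA_j^\top[-1,1]^n = \partial r_j(\mathbf{0})$. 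Under a valid trajectory for $r_j$ that delivers $\vx_{k_j} \approx \mathbf{0}$ together with an admissible $\vg_{k_j} \approx \vu_j$, the linear representation forces
\begin{equation*}
\|\vx_{k_j+1}\| \;\geq\; \|\vP_{k_j}\vu_j\| - \|\vx_{k_j}\| - \|\vb_{k_j}\| - \|\vP_{k_j}\|\,\|\vg_{k_j} - \vu_j\| \;\geq\; \delta/2
\end{equation*}
for all large $j$, contradicting the uniform bound $\|\vx_{k_j+1}\| \leq \epsilon_{k_j+1} \to 0$.

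The main obstacle will be producing such a trajectory. Away from the origin the subdifferential $\partial r_j(\vx)$ collapses to the singleton $\vA_j^\top \mathrm{sign}(\vA_j\vx)$ whenever $\vA_j\vx$ has no zero entry, so intermediate subgradient choices are effectively forced and we cannot steer the iterate exactly to $\vx_{k_j} = \mathbf{0}$. I will handle this using two complementary facts: first, the uniform convergence hypothesis guarantees $\|\vx_{k_j}\| \leq \epsilon_{k_j}$ regardless of how the intermediate subgradients are selected, so smallness at step $k_j$ is automatic; and second, outer semicontinuity of $\partial r_j$ together with a small perturbation of $\vA_j$ inside the compact orthogonal group lets us select $\vg_{k_j} \in \partial r_j(\vx_{k_j})$ within $O(\|\vx_{k_j}\|)$ of $\vu_j$. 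Quantifying this perturbation carefully so that the $\|\vP_{k_j}\vu_j\|$ contribution dominates in the contradiction chain is the delicate step that closes the proof.
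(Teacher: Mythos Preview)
Your first paragraph (the $\vb_k \to \mathbf{0}$ part) matches the paper. The gap is in your plan for $\vP_k \to \mathbf{0}$, specifically the claim that you can select $\vg_{k_j} \in \partial r_j(\vx_{k_j})$ within $O(\|\vx_{k_j}\|)$ of $\vu_j$. At any point $\vx$ where all coordinates of $\vA_j\vx$ are nonzero---and this is the generic situation along the trajectory---the subdifferential $\partial r_j(\vx)$ is the \emph{singleton} $\{\vA_j^\top\mathrm{sign}(\vA_j\vx)\}$. Since $\mathrm{sign}(\vA_j\vx) \in \{-1,1\}^n$ while $\vu_j = \vA_j^\top\ve_1$, orthogonality of $\vA_j$ gives $\|\vg_{k_j} - \vu_j\| = \|\mathrm{sign}(\vA_j\vx_{k_j}) - \ve_1\| \geq \sqrt{n-1}$, a quantity that does \emph{not} shrink with $\|\vx_{k_j}\|$. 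Outer semicontinuity only says that limit points of subgradients lie in $\partial r_j(\mathbf{0})$; it does not let you borrow elements of $\partial r_j(\mathbf{0})$ at nearby points. A small perturbation of $\vA_j$ does not help either: the forced subgradient is still of the form $(\vA_j')^\top\vs'$ with $\vs' \in \{-1,1\}^n$, hence still at distance roughly $\sqrt{n-1}$ from $\vu_j$. Your contradiction inequality therefore collapses, because the error term $\|\vP_{k_j}\|\,\|\vg_{k_j} - \vu_j\|$ is of order $\sqrt{n}\,C\sqrt{n-1}$ and swamps the $\delta$.

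The paper sidesteps this by aligning the \emph{entire} orthogonal matrix with the SVD of $\vP_{k_l}$: write $\vP_{k_l} = \vU_{k_l}\Sigma_{k_l}\vV_{k_l}^\top$ and set $\vA_{k_l} = \vV_{k_l}^\top$. Then for \emph{every} sign vector $\vs \in \{-1,1\}^n$ one has $\vP_{k_l}\vA_{k_l}^\top\vs = \vU_{k_l}\Sigma_{k_l}\vs$, whose norm is at least $\sigma^1_{k_l} > \varepsilon$. There is no need to control which particular $\vs$ the trajectory produces; a measure-zero argument over initial points ensures $\vA_{k_l}\vx_{k_l}$ has no zero coordinate, and the contradiction follows directly from $\|\vx_{k_l+1} - \vx_{k_l}\| \geq \varepsilon - \|\vb_{k_l}\|$, bypassing any subgradient-approximation step. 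Fixing only one column of $\vA_j^\top$ to be the top singular vector, as you do, is not enough; you need all of $\vV_{k_l}$.
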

 The conclusion is similar to Proposition~\ref{prop:subgra}. The preconditioner $\vP_{k}$ goes smaller and smaller as $k \to \infty$, which means the update step size should be diminishing. The convergence rate gets slower and slower as $k$ increases. Thus, the explicit update rule \eqref{eq:l2o-subgd-app} is not efficient.

 \subsection{Proof of Proposition~\ref{prop:subgra}}
 
  \begin{proof}
 Following the same proof line with that of Theorem~\ref{theo:gd}, we can get the conclusion: for any sequence $\{d_{k}\}_{k=0}^{\infty}$ satisfying the conditions described in Proposition~\ref{prop:subgra}, there exists a sequence $\{p_{k},b_{k}\}_{k=0}^{\infty}$ such that
 \[
 x_{k+1} = x_{k} - p_{k}g_{k} - b_{k},
 \]
 where $g_{k} \in \partial r(x_{k})$ and $|p_{k}| \leq C$ and $b_{k}\to 0$. 
 
 Then we want to show that, as long as all sequences $\{x_{k}\}$ generated by \eqref{eq:l2o-subgd} \emph{uniformly} converges to $x_\ast$, it must hold that $p_{k}\to 0$. We show this by contradiction and assume $p_{k}$ does not converge to zero. In another word, there exist a fixed real number $\varepsilon>0$ and a sub-sequence of $\{p_{k}\}$ such that 
 \[|p_{k_l}| > \varepsilon,\quad l = 1,2,\cdots. \]
 Now we claim that: given $\{p_{k},b_{k}\}_{k=0}^{\infty}$, for any $\hat{k} > 0$, there exits an initial point $x_{0}$ such that $x_{k} \neq 0$ for all $k \leq \hat{k}$. The proof is as follows:
 \begin{itemize}[leftmargin=*]
     \item Given $x_{0}\neq 0$, we have $g_{0} = 1$ or $g_{0} = -1$ due to \eqref{eq:abs-sub-gra}.
     \item To guarantee $x_{1}\neq 0$, it's enough that $x_{0} + p_{0} - b_{0}\neq 0, x_{0} + p_{0} - b_{0}\neq 0$.
     \item Define \[\cX_{1} := \{b_{0}+p_{0},b_{0}-p_{0}\}.\] As long as $x_{0} \not \in \cX_{0} \bigcup \cX_{1}$, we can guarantee $x_{0}\neq 0$ and $x_{1} \neq 0$.
     \item Define \[\cX_{2} := \{b_{0}+p_{0}+b_{1}+p_{1},b_{0}+p_{0}+b_{1}-p_{1},b_{0}-p_{0}+b_{1}+p_{1},b_{0}-p_{0}+b_{1}-p_{1}\}.\] As long as $x_{0} \not \in \cX_{0} \bigcup \cX_{1} \bigcup \cX_{2}$, we can guarantee $x_{0}\neq 0$ and $x_{1} \neq 0$ and $x_{2} \neq 0$.
     \item $\cdots$
 \end{itemize}
 Repeat the above statement for $\hat{k}$ times, we obtain: $x_{0} \not \in \bigcup\limits_{k\leq \hat{k}}\cX_{k}$ implies $x_{k} \neq 0$ for all $k \leq \hat{k}$, where the set $\cX_{k}$ contains $2^k$ elements. Thus, $x_{0}$ can be chosen almost freely within $[-1,1]$ excluding a set with a finite number of elements. The claim is proven.
 
 With $\hat{k} = k_l$, we conclude that, for all $l=1,2,\cdots$, there exists an initial point $x_{0}$ such that $x_{k} \neq 0$ for all $k \leq k_l$. Consequently, it holds that $g_{k_l} = 1$ or $g_{k_l} = -1$. Then, 
 \[|x_{k_l+1} - x_{k_l}| \geq |p_{k_l}g_{k_l}| - |b_{k_l}| = \varepsilon - |b_{k_l}|,\]
 which contradicts with the fact that $b_{k}\to 0$ and $x_{k}\to 0$ uniformly for all initial points. This completes the proof for $p_{k}\to 0$.
 \end{proof}
 
 \subsection{Proof of Proposition~\ref{prop:subgra2}}
 \begin{proof}
 The proof of Proposition~\ref{prop:subgra2} extends the proof of Proposition~\ref{prop:subgra} and follows a similar proof sketch. But the $n$-dim case is much more complicated than the $1$-dim case. Consequently, we need stronger assumptions: $\vx_{k}$ converges uniformly not only for all initial points, but also for a family of objective function $f \in \cF_{\ell_1}(\mathbb{R}^n)$. 
 
 In our proof, we denote $(\vA\vx)^{i}$ as the $i$-th element of vector $\vA\vx$ and the index of a matrix is denoted by $(:,:)$. For example, $\vA(i,:)$ means the $i$-th row of $\vA$; $\vA(:,i)$ means the $i$-th column of $\vA$.
 
 Following the same proof line with that of Theorem~\ref{theo:gd}, we can get the conclusion (similar with Proposition~\ref{prop:subgra}): for any sequence $\{\vd_{k}\}_{k=0}^{\infty}$ satisfying the conditions described in Proposition~\ref{prop:subgra2}, there exists a sequence $\{\vP_{k},\vb_{k}\}_{k=0}^{\infty}$ such that
 \[\vx_{k+1} = \vx_{k} - \vP_{k}\vg_{k} - \vb_{k},\]
 where $\vg_{k} \in \partial r(\vx_{k})$ and $\|\vP_{k}\| \leq \sqrt{n} C$ and $\vb_{k}\to \mathbf{0}$. It's enough to show that $\vP_{k}\to\mathbf{0}$.
 
 Before proving $\vP_{k}\to\mathbf{0}$, we first claim and prove an statement: given $\{\vP_{k},\vb_{k}\}_{k=0}^{\infty}$ and any $r(\vx) = \|\vA \vx\|_1 \in \cF_{\ell_1}(\mathbb{R}^n)$ and any $\hat{k} > 0$, there exits an initial point $\vx_{0}$ such that $(\vA \vx_{k})^i \neq 0$ for all $k \leq \hat{k}$ and $i=1,2,\cdots,n$. The proof is as follows:
 \begin{itemize}[leftmargin=*]
     \item To guarantee $(\vA \vx_{0})^i \neq 0$, $\vx_{0}$ must satisfy:
     \[\vx_{0} \not \in \cX_{0}^i = \{\vx: \vA(i,:)\vx = 0\}.\]
     \item Given $(\vA \vx_{0})^i \neq 0, 1\leq i \leq n$, we have $\vg_{0} = \vA^\top  \mathrm{sign}(\vA \vx_{0})$, where $ \mathrm{sign}(\vA \vx_{0}) \in \{1,-1\}^n$, due to \eqref{eq:abs2-sub-gra}.
     \item To guarantee $(\vA \vx_{1})^i \neq 0$, it's enough that $\vA(i,:)(\vx_{0} - \vP_{0}\vA^\top \vs - \vb_{0}) \neq 0$ for all $\vs \in \{1,-1\}^n$.
     \item Define 
     \[\cX_{1}^i = \{\vx: \vA(i,:)(\vx - \vP_{0}\vA^\top \vs_{0} - \vb_{0}) = 0 \textnormal{ for some }\vs_{0} \in \{1,-1\}^n\}.\]
     As long as $\vx_{0} \not \in  \bigcup\limits_{1\leq i \leq n, 0 \leq k\leq 1} \cX_{k}^i $, we guarantee that $(\vA \vx_{k})^i \neq 0$ for all $k \leq 1$ and $i=1,2,\cdots,n$.
     \item Define 
     \[\cX_{2}^i = \{\vx: \vA(i,:)(\vx - \vP_{0}\vA^\top \vs_{0} - \vb_{0} - \vP_{1}\vA^\top \vs_{1} - \vb_{1}) = 0 \textnormal{ for some }\vs_{0},\vs_{1} \in \{1,-1\}^n\}.\]
     As long as $\vx_{0} \not \in  \bigcup\limits_{1\leq i \leq n, 0 \leq k\leq 2} \cX_{k}^i $, we guarantee that $(\vA \vx_{k})^i \neq 0$ for all $k \leq 2$ and $i=1,2,\cdots,n$.
    \item $\cdots$
 \end{itemize}
Repeat the above statement for $\hat{k}$ times, we obtain: $\vx_{0} \not \in \bigcup\limits_{1\leq i \leq n, 0 \leq k\leq \hat{k}} \cX_{k}^i$ implies the conclusion we want. Moreover, the set $\cX_{k}^i$ has measurement zero in the space $\mathbb{R}^n$ due to the fact that each row of matrix $\vA$: $\vA(i,:)$ is not zero ($\vA$ is non-singular). Finite union of $\cX_{k}^i$ also has measurement zero. Thus, $\vx_{0}$ can be chosen almost freely in $[-1,1]^n$ excluding a set with zero measurements. The claim is proven.
 
 Now we show $\vP_{k}\to\mathbf{0}$ by contradiction. Assume there exist a fixed real number $\varepsilon>0$ and a sub-sequence of $\{\vP_{k}\}$ such that 
 \begin{equation}
     \label{eq:p_lower_bound}
     \|\vP_{k_l}\| > \varepsilon,\quad l = 1,2,\cdots.
 \end{equation}
  Conduct SVD on $\vP_{k_l}$: \[\vP_{k_l} = \vU_{k_l} \Sigma_{k_l} \vV_{k_l}^\top  = \vU_{k_l} \begin{bmatrix} \sigma^1_{k_l} & & & \\
  & \sigma^2_{k_l} & & \\ & & \ddots & \\ & & & \sigma^n_{k_l}\end{bmatrix} \vV_{k_l}^\top . \]
  Inequality \eqref{eq:p_lower_bound} implies the largest singular value of $\vP_{k_l}$ should be greater than $\varepsilon$. WLOG, we assume $\sigma^1_{k_l}$ is the largest one and, consequently, $\sigma^1_{k_l} > \varepsilon$. Given such $\vV_{k_l}$, we define
  \[f_{k_l}(\vx) = \|\vA_{k_l} \vx\|_1, \quad \vA_{k_l} = \vV_{k_l}^\top . \]
  It's easy to check that $f_{k_l}\in \cF_{\ell_1}(\mathbb{R}^n)$. 
  
  Given $\{\vP_{k_l}, \vb_{k_l}\}_{l=0}^{\infty}$ and function $f_{k_l}(\vx)$, we take an initial point $\vx_{0} \not \in \bigcup\limits_{1\leq i \leq n, 0 \leq k\leq k_l} \cX_{k}^i$, then we have $(\vA_{k_l} \vx_{k_l})^i \neq 0$. Thus, it holds  that
  \[\vx_{k_l+1} = \vx_{k_l} - \vP_{k_l}\vA_{k_l}^\top \mathrm{sign}(\vA_{k_l} \vx_{k_l}) - \vb_{k_l} = \vx_{k_l} - \vP_{k_l}\vA_{k_l}^\top \vs_{k_l} - \vb_{k_l}\]for some $\vs_{k_l}\in\{1,-1\}^n$. Rewrite the second term on the right-hand side
  \[\vP_{k_l}\vA_{k_l}^\top \vs_{k_l} = \vU_{k_l} \begin{bmatrix} \sigma^1_{k_l} & & & \\
  & \sigma^2_{k_l} & & \\ & &  \ddots & \\ & & & \sigma^n_{k_l} \end{bmatrix} \vs_{k_l} = \vU_{k_l} \begin{bmatrix}  \sigma^1_{k_l}s_{k_l}^{1} \\ \sigma^2_{k_l}s_{k_l}^{2} \\ \vdots \\ \sigma^n_{k_l}s_{k_l}^{n}  \end{bmatrix}.\]
  Its norm is lower bounded by
  \[\big\| \vP_{k_l}\vA_{k_l}^\top \vs_{k_l} \big\| \geq |\sigma^1_{k_l}s_{k_l}^{1}| > \varepsilon.\]
  Then we get
  \[\|\vx_{k_l+1} - \vx_{k_l}\| \geq \big\| \vP_{k_l}\vA_{k_l}^\top \vs_{k_l} \big\| - \|\vb_{k_l}\|,\]
  which contradicts with the fact that $\vb_{k}\to \mathbf{0}$ and $\vx_{k}\to\mathbf{0}$ uniformly. $\vP_{k}\to\mathbf{0}$ is proved and this finishes the proof.
 \end{proof}

 \section{Scheme~(\ref{eq:final-scheme}) Covers Many Schemes in the Literature}
In this section, we show that our proposed scheme \eqref{eq:final-scheme} covers FISTA~\citep{beck2009fast}, PGD with variable metric~\citep{park2020variable}, Step-LISTA~\citep{ablin2019learning}, and Ada-LISTA~\citep{aberdam2021ada}. We will show them one by one. To facilitate reading, we rewrite \eqref{eq:final-scheme} here:
\[
     \begin{aligned}
    \hat{\vx}_{k} &= \vx_{k} - \vp_{k}\odot\nabla f(\vx_{k}),\\
    \hat{\vy}_{k} &= \vy_{k} - \vp_{k}\odot\nabla f(\vy_{k}),\\
    \vx_{k+1} &= \prox_{r, \vp_{k}} \Big( (\mathbf{1} - \vb_{k}) \odot \hat{\vx}_{k} + \vb_{k} \odot \hat{\vy}_{k} - \vb_{1,k} \Big),\\
     \vy_{k+1} &= \vx_{k+1} + \va_{k} \odot (\vx_{k+1} - \vx_{k}) + \vb_{2,k}.
    \end{aligned}
\]

\paragraph{FISTA.} The update rule of FISTA (with constant step size) writes
\begin{equation}
    \label{eq:fista}
    \begin{aligned}
     \vy_{k+1} &= \prox_{r, (1/L)\mathbf{1}} \Big( \vx_{k} - \frac{1}{L} \nabla f(\vx_{k})\Big),\\
    t_{k+1} &= \frac{1 + \sqrt{1+4t_{k}^2}}{2},\\
     \vx_{k+1} &= \vy_{k+1} + \frac{t_{k}-1}{t_{k+1}} (\vy_{k+1} - \vy_{k}),
    \end{aligned}
\end{equation}
where $L$ is the Lypuschitz constant of $\nabla f$. 
Thus, as long as $\vb_{k}=\mathbf{1}$, $\vb_{1,k}=\vb_{2,k}=\mathbf{0}$, $\vp_{k} = (1/L)\mathbf{1}$, and $\va_{k} = \frac{t_{k}-1}{t_{k+1}}\mathbf{1}$, \eqref{eq:final-scheme} is equal to \eqref{eq:fista}.

\paragraph{PGD.} PGD with variable metric writes
\begin{equation}
    \label{eq:pgd-pre}
    \vx_{k+1} = \prox_{r, \vp_{k}} \Big( \vx_{k} - \vp_{k} \odot \nabla f(\vx_{k})\Big).
\end{equation}
If $\vb_{k}=\va_{k}=\vb_{1,k}=\vb_{2,k}=\mathbf{0}$, \eqref{eq:final-scheme} reduces to \eqref{eq:pgd-pre}. 

\paragraph{Step-LISTA.} The update rule of Step-LISTA writes
\begin{equation}
    \label{eq:step-lista}
    \vx_{k+1} = \sigma\Big(  \vx_{k} - p_{k} \vA^\top\big( \vA \vx - \vb \big), \theta_{k}  \Big).
\end{equation}
If the objective function is taken as standard LASSO and we take $\vp_{k} = p_{k}\mathbf{1}$
\[F(\vx) = \underbrace{\frac{1}{2}\|\vA\vx-\vb\|^2}_{f(\vx)} + \underbrace{\lambda \|\vx\|_1}_{r(\vx)},\]
then we have
\[\nabla f(\vx) = \vA^\top (\vA\vx - \vb),\quad \prox_{r,\vp_{k}}(\vx) = \sigma(\vx, \lambda p_{k}).\]We want to show that, for any sequence $\{\theta_{k}\}_{k=0}^{\infty}$, there exists a sequence $\{\va_{k}, \vb_{k},\vb_{1,k},\vb_{2,k}\}_{k=0}^{\infty}$ such that
\eqref{eq:final-scheme} equals to \eqref{eq:step-lista}.
\begin{proof}
Take $\vp_{k} = p_{k}\mathbf{1}$ and $\va_{k}=\vb_{k}=\vb_{2,k}=\mathbf{0}$, \eqref{eq:final-scheme} reduces to
\[\vx_{k+1} = \sigma\Big(\vx_{k} - p_{k}\vA^\top (\vA\vx_{k} - \vb) - \vb_{1,k}, \lambda p_{k}\Big).\]
Define 
\[\hat{\vx}_{k} = \vx_{k} - p_{k}\vA^\top (\vA\vx_{k} - \vb).\]

If $\theta_{k} > \lambda p_{k}$, define $\vb_{1,k}$ component-wisely as (Here $(\vb_{1,k})_i$ means the $i$-th component of vector $\vb_{1,k}$):
\[(\vb_{1,k})_i = \sign\big((\hat{\vx}_{k})_i\big) \min(\theta_{k} - \lambda p_{k}, \big|(\hat{\vx}_{k})_i\big|).\]
Then one can check that 
\begin{equation}
\label{eq:app-equal-theta}
    \sigma\big(  \hat{\vx} - \vb_{1,k}, \lambda p_{k} \big) = \sigma\big(  \hat{\vx}, \theta_{k} \big),
\end{equation}
where the role of $\vb_{1,k}$ is enhancing the soft thereshold from $\lambda p_{k}$ to $\theta_{k}$.

If $\theta_{k} < \lambda p_{k}$, define $\vb_{1,k}$ with: 
\[(\vb_{1,k})_i = \begin{cases}\sign((\hat{\vx}_{k})_i) (\theta_{k} - \lambda p_{k}), & (\hat{\vx}_{k})_i > \theta_{k}\\
0, & (\hat{\vx}_{k})_i \leq \theta_{k}\end{cases},\]
then \eqref{eq:app-equal-theta} also holds in this case. 

With the $\{p_{k}, \va_{k}, \vb_{k},\vb_{1,k},\vb_{2,k}\}$ defined above, it holds that \eqref{eq:final-scheme} equals to \eqref{eq:step-lista}, which finishes the proof.
\end{proof}

\paragraph{Ada-LISTA.} The update rule of Ada-LISTA with single weight matrix writes
\begin{equation}
    \label{eq:ada-lista}
    \vx_{k+1} = \sigma\Big(  \vx_{k} - p_{k} \vA^\top \vM^\top \vM \big( \vA \vx - \vb \big), \theta_{k}  \Big).
\end{equation}
If the objective function is taken as LASSO with a learned dictionary $\vM$:
\[F(\vx) = \underbrace{\frac{1}{2}\Big\|\vM\big(\vA\vx-\vb\big)\Big\|^2}_{f(\vx)} + \underbrace{\lambda \|\vx\|_1}_{r(\vx)},\]
and we follow the same proof line as that of Step-LISTA, then we obtain that \eqref{eq:final-scheme} covers \eqref{eq:ada-lista}.

\section{Examples of Explicit Proximal Operator}
\label{sec:example-prox}

As long as one can evaluate $\nabla f$ and the proximal operator $\prox_{r, \vp_{k}}$, the update rule \eqref{eq:final-scheme} is applicable. The gradient $\nabla f$ is accessible since $f \in \cF_{L}(\mathbb{R}^n)$. For a broad class of $r(\vx)$, the operator $\prox_{r,\vp_{k}}$ has efficient explicit formula. We list some examples here and more examples can be found in \citep{parikh2014proximal,park2020variable}.
 \begin{itemize}[leftmargin=*]
     \item ($\ell_1$-norm) Suppose $r(\vx) = \lambda \|\vx\|_1$, then the proximal operator is a scaled soft-thresholding operator that is component-wisely defined as $\big(\prox_{r, \vp_{k}}(\vx)\big)_i := \mathrm{sign}(x_i)\max(0, |x_i| - \lambda (\vp_{k})_i),$ for $1 \leq i \leq n$.
     \item (Non-negative constraint)
     Suppose $r(\vx) = \iota_{\cX}(\vx)$ where $\cX = \{\vx \in \mathbb{R}^n: x_i \geq 0, 1\leq i \leq n\}$ and $\iota_{\cX}$ is the indicator function (i.e., $\iota_{\cX}(\vx) = 0$ if $\vx\in\cX$; $\iota_{\cX}(\vx) = +\infty$ otherwise), then $\prox_{r,\vp_{k}}$ is 
     component-wisely defined as $\big(\prox_{r, \vp_{k}}(\vx)\big)_i := \max(0, x_i),~~ 1\leq i \leq n$.
     \item (Simplex constraint) Suppose $r(\vx) = \iota_{\cX}(\vx)$ where $\cX = \{\vx \in \mathbb{R}^n: x_i \geq 0, 1\leq i \leq n; \bm{1}^\top\vx = 1.\}$, then $\prox_{r,\vP_{k}}$ is component-wisely defined as $\big(\prox_{r, \vp_{k}}(\vx)\big)_i:= \max(0, x_i - \xi (\vp_{k})_i),~ 1\leq i \leq n$, where $\xi\in\mathbb{R}^+$ can be determined efficiently via bisection.
 \end{itemize}

\section{Details in Our Experiments}
\label{sec:details-exp}

\paragraph{LASSO Regression.} In this paragraph, we provide the details of the LASSO benchmarks used in this paper.
\begin{itemize}
    \item (Synthetic data). Each element in $\vA \in \mathbb{R}^{250\times500}$ is sampled \emph{i.i.d.}~from the normal distribution, and each column of $\vA$ is normalized to have a unit $\ell_2$-norm. Then we randomly generate sparse vector $\vx_{\ast} \in \mathbb{R}^{500}$. In each sparse vector, we first uniformly sample $50$ out of $500$ entries to be nonzero, and the value of each nonzero is sampled independently from the normal distribution. With $\vA$ and $\vx_{\ast}$, we generate $\vb$ with $\vb=\vA\vx_{\ast}$. Such tuple $(\vA,\vb,\lambda)$ forms an instance of LASSO, and we repeatedly generate $(\vA,\vb,\lambda)$ in the above approach. The training set includes $32,000$ independent optimization problems and the testing set includes $1,024$ independent optimization problems. We take $\lambda=0.1$ for all synthetic LASSO instances. 
    \item (Real data). We extract $1000$ patches with size $8\times8$ at random positions from testing images that are randomly chosen from BSDS500~\citep{MartinFTM01}. Each patch is flattened to a vector in space $\mathbb{R}^{64}$ and normalized and mean-removed. Then we conduct K-SVD \citep{aharon2006k} to obtain a dictionary $\vA\in\mathbb{R}^{64\times128}$. Each vector in $\mathbb{R}^{64}$ can be viewed as an instance of $\vb$ in LASSO \eqref{eq:lasso}. With a shared matrix $\vA$, we construct $1000$ instances of LASSO and test our methods on them. We take $\lambda=0.5$ for all real-data LASSO instances.
\end{itemize}

\paragraph{Logistic Regression.} Given a set of training examples $\{(\va_i, b_i) \in \mathbb{R}^n\times\{0,1\}\}_{i=1}^m$, the objective function of the $\ell_1$-regularized logistic regression problem is defined as
\begin{equation}
\label{eq:logreg}
    \min_{\vx\in\mathbb{R}^n} F(\vx) = -\frac{1}{m}\sum_{i=1}^m [b_i \log(h(\va_i^\top\vx)) + (1-b_i)\log(1-h(\va_i^\top\vx))] + \lambda \|\vx\|_1,
\end{equation}
where $h(c)=1/(1+e^{-c})$ is the logistic function. 
For each logistic regression problem, we generate 1,000 feature vectors, each of which $\va \in \mathbb{R}^{50}$ is sampled \emph{i.i.d.}~from the normal distribution. Then we randomly generate a sparse vector $\vx_{\ast} \in \mathbb{R}^{50}$ and uniformly sample $20$ out of its $50$ entries to be nonzero, and the value of each nonzero is sampled independently from the normal distribution. With $\va_i$ and $\vx_{\ast}$, we generate the binary classification label $b_i$ with $b_i=\mathbbm{1}(\va_i^\top\vx_{\ast}>=0)$, where $\mathbbm{1}(\cdot)$ is the indicator function. Finally, we fix $\lambda=0.1$. Such a pair $(\{(\va_i, b_i)\}_{i=1}^m,\lambda)$ forms an instance of logistic regression with $\ell_1$ regularization, and we repeatedly generate such pairs in the above approach. The training set includes $32,000$ independent optimization problems and the testing set includes $1,024$ independent optimization problems.

We evaluate L2O optimizers (trained on synthesized datasets) on two real-world datasets from the UCI Machine Learning Repository \citep{Dua2019}: (i) \emph{Ionosphere} containing 351 samples of 34 features, and (ii) \emph{Spambase} containing 4,061 samples of 57 features. The results on the Ionosphere dataset is shown in the main text Figure~\ref{fig:logistic-real}. And here we present the results on the Spambase dataset in Figure~\ref{fig:logistic-real-spambase}. Our observation is consistent: L2O-PA is superior in stability and fast convergence compared to all other baselines and is almost 20$\times$ faster than FISTA.

\begin{figure}[h]
    \centering
    \includegraphics[width=0.46\linewidth]{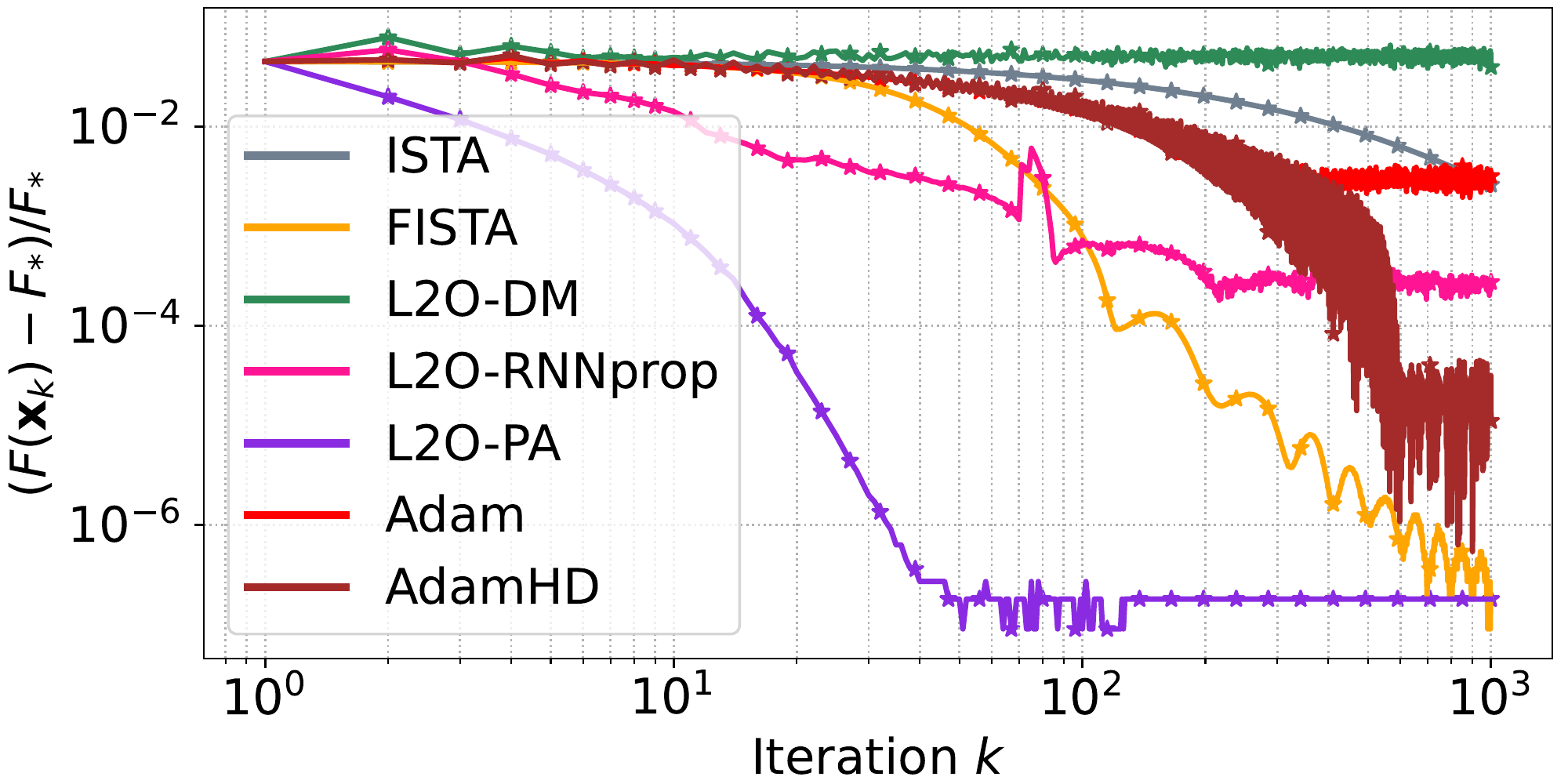}
    \vspace{-1mm}
    \caption{Logistic: Train on synthetic data and test on real data (Spambase).}
    \label{fig:logistic-real-spambase}
\end{figure}

\section{Extra Experiments}
\label{sec:extra-exp}

\paragraph{Running Time Comparison.} Considering that HPO methods such as AdamHD do not require LSTM and consume less time per iteration compared to L2O-PA, we compared the running time of our proposed method L2O-PA and AdamHD in Table~\ref{tab:runtime}. The experiment settings follow those in Section 4.2. In these tables, "Time/Iters" represents the average time consumed for each iteration across the $1024$ testing examples. The "Iters" column indicates the number of iterations needed to achieve the specified precision, while the "Time" column denotes the time required to reach that precision. ``N/A" is used when AdamHD cannot attain a precision of $10^{-6}$ and ``Gap" means the optimality gap $(F(\vx_{k})-F_\ast)/F_\ast$. Table~\ref{tab:runtime} clearly shows that L2O-PA requires much less time than AdamHD, even though its per-iteration complexity is higher than that of AdamHD.

\begin{table}[h]
\vspace{-0.2em}
\centering
\caption{Runtime Comparison between L2O-PA and AdamHD.}
\label{tab:runtime}
\vspace{0.8em}
\begin{tabular}{cccccc}
\toprule
\multicolumn{1}{c|}{}       & \multicolumn{1}{c|}{}           & \multicolumn{2}{c|}{Stopping condition: Gap \textless $10^{-3}$} & \multicolumn{2}{c}{Stopping condition: Gap \textless $10^{-6}$} \\ 
\midrule
\multicolumn{1}{c|}{}       & \multicolumn{1}{c|}{Time/Iters} & \multicolumn{1}{c|}{Iters}    & \multicolumn{1}{c|}{Total Time}   & \multicolumn{1}{c|}{Iters}              & Total Time              \\ \bottomrule
\multicolumn{6}{c}{\bf LASSO (Synthetic)}                                                      \\ 
\toprule
\multicolumn{1}{c|}{L2O-PA} & \multicolumn{1}{c|}{$2.31 \times 10^{-2}$  ms}           & \multicolumn{1}{c|}{$21$}         & \multicolumn{1}{c|}{$0.485$ ms}       & \multicolumn{1}{c|}{$42$}                   &        $0.971$ ms           \\ 
\midrule
\multicolumn{1}{c|}{AdamHD} & \multicolumn{1}{c|}{$8.09 \times 10^{-3}$  ms}           & \multicolumn{1}{c|}{$477$}         & \multicolumn{1}{c|}{$3.858$ ms}       & \multicolumn{1}{c|}{N/A}                   &        N/A           \\ 
\bottomrule
\multicolumn{6}{c}{\bf Logistic (Spambase)}                                                \\ 
\toprule
\multicolumn{1}{c|}{L2O-PA} & \multicolumn{1}{c|}{$7.845 \times 10^{-1}$  ms}           & \multicolumn{1}{c|}{$10$}         & \multicolumn{1}{c|}{$7.845$ ms}       & \multicolumn{1}{c|}{$33$}                   &        $25.89$ ms           \\ 
\midrule
\multicolumn{1}{c|}{AdamHD} & \multicolumn{1}{c|}{$2.605 \times 10^{-1}$  ms}           & \multicolumn{1}{c|}{$390$}         & \multicolumn{1}{c|}{$101.6$ ms}       & \multicolumn{1}{c|}{N/A}                   &       N/A            \\ 
\bottomrule
\end{tabular}
\vspace{-1mm}
\end{table}

\paragraph{Large-Scale LASSO.} To evaluate our model's performance on large problems, we generate $256$ independent LASSO instances of size $2500\times5000$, following the same distribution described in Section~\ref{sec:details-exp}. All the learning models are trained with instances of size $250\times500$ and tested on these $256$ large testing problems. The results, reported in Figure~\ref{fig:lasso-ultra-large}, clearly demonstrate that our proposed (L2O-PA) exhibits a superior ability to generalize to large problems.

\begin{figure}[h]
    \centering
    \includegraphics[width=0.46\linewidth]{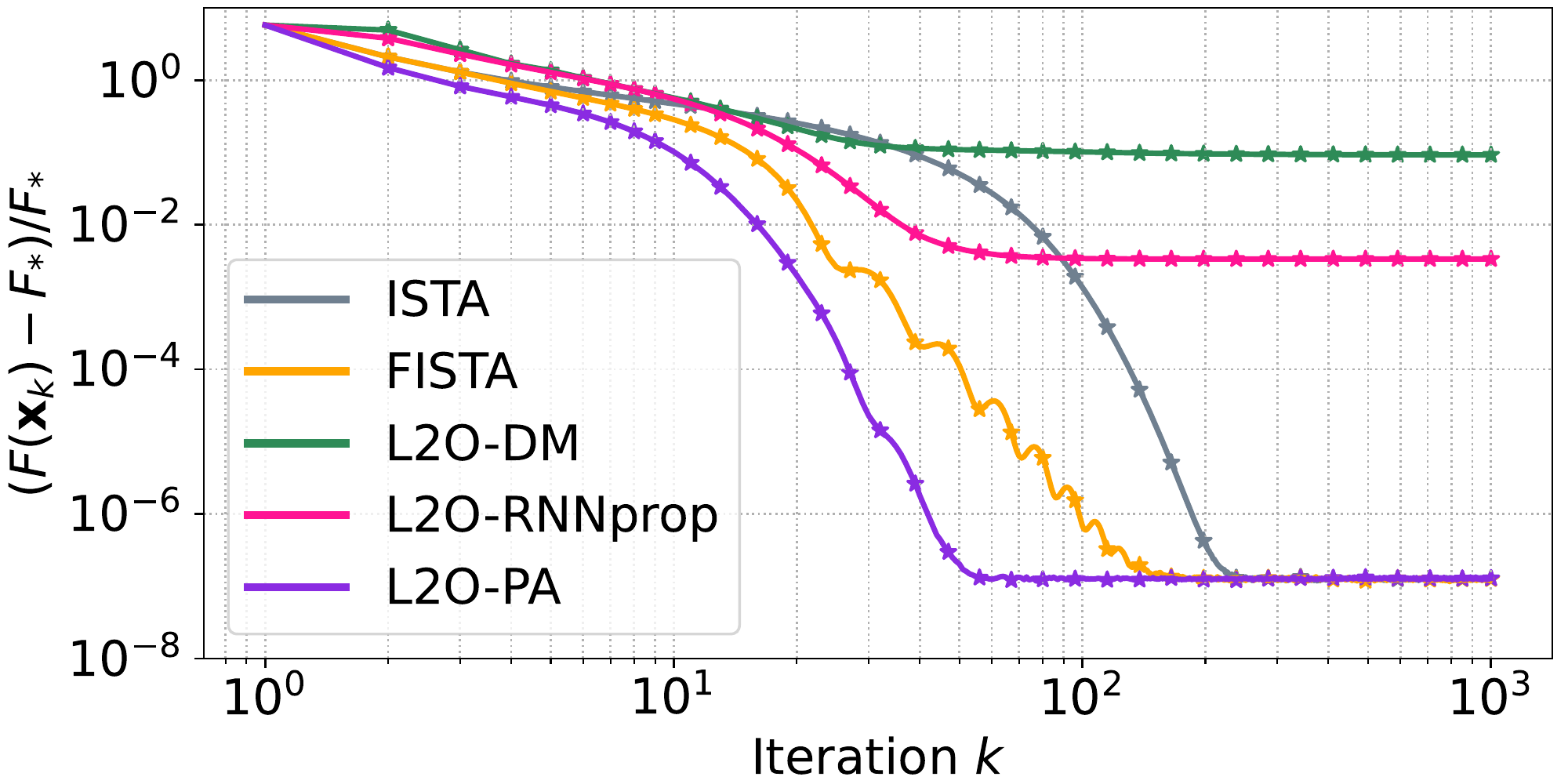}
    \vspace{-1mm}
    \caption{LASSO: Train with small instances and test on large instances.}
    \label{fig:lasso-ultra-large}
\end{figure}

\paragraph{Longer-Horizen Experiments.} To test the performance of our method with longer horizons, say $10^{4}$ iterations, we applied our proposed approach (L2O-PA) to a logistic regression problem with $\ell_1$ regularization on CIFAR-10 for classification, and compared with other baselines. We randomly sampled $5000$ images in the training set of CIFAR-10 (500 images from each class out of 10) for the logistic regression, which followed a similar manner as in \citep{cowen2019lsalsa}. Each image was normalized and flattened into a $3072$-dim vector (i.e., $3\times32\times32$). Since the feature dimension is significantly higher than what we considered in Section 4.2, we used a much smaller regularization coefficient $\lambda=10^{-4}$ to avoid all zero solutions.

We trained learning-based models (L2O-PA, L2O-DM and L2O-RNNprop) on a set of synthesized $\ell_1$-regularized logistic regression tasks for binary classification with $\lambda=10^{-4}$ in the same way as we did in the second part of Section 4.2 and described in Section~\ref{sec:details-exp}. Each logistic regression task contains a dataset of $1000$ samples with $50$ features. After training, all models are directly applied to optimizing the 10-class logistic regression on CIFAR-10 for $10^{4}$ steps. The results, with comparisons to ISTA and FISTA, are shown in Figure~\ref{fig:logistic-real-cifar10}. 
From the results we can see that:
\begin{itemize}
    \item Our method, L2O-PA, converged quite stably in both near and further horizons compared to L2O-DM and L2O-RNNprop, which fluctuated wildly in later iterations. This shows the impressive generalization ability of L2O-PA considering the fact that it was trained in short-horizon settings (100 optimization steps).
    \item Compared to FISTA, L2O-PA can still achieve impressive acceleration in earlier iterations (25 iterations of L2O-PA comparable to FISTA at $1000+$ iterations, and $300$ steps vs $2500+$ steps for FISTA to reach $10^{-2}$ relative error).
\end{itemize}
Therefore, the conclusion is that L2O-PA can still generalize well, to some extent, to longer-horizon tasks even if it is trained in short-horizon settings but it does struggle to converge fast in later iterations. We are happy to include this discussion in the main text as limitations and improve in this direction in the future.
\begin{figure}[h]
    \centering
    \includegraphics[width=0.46\linewidth]{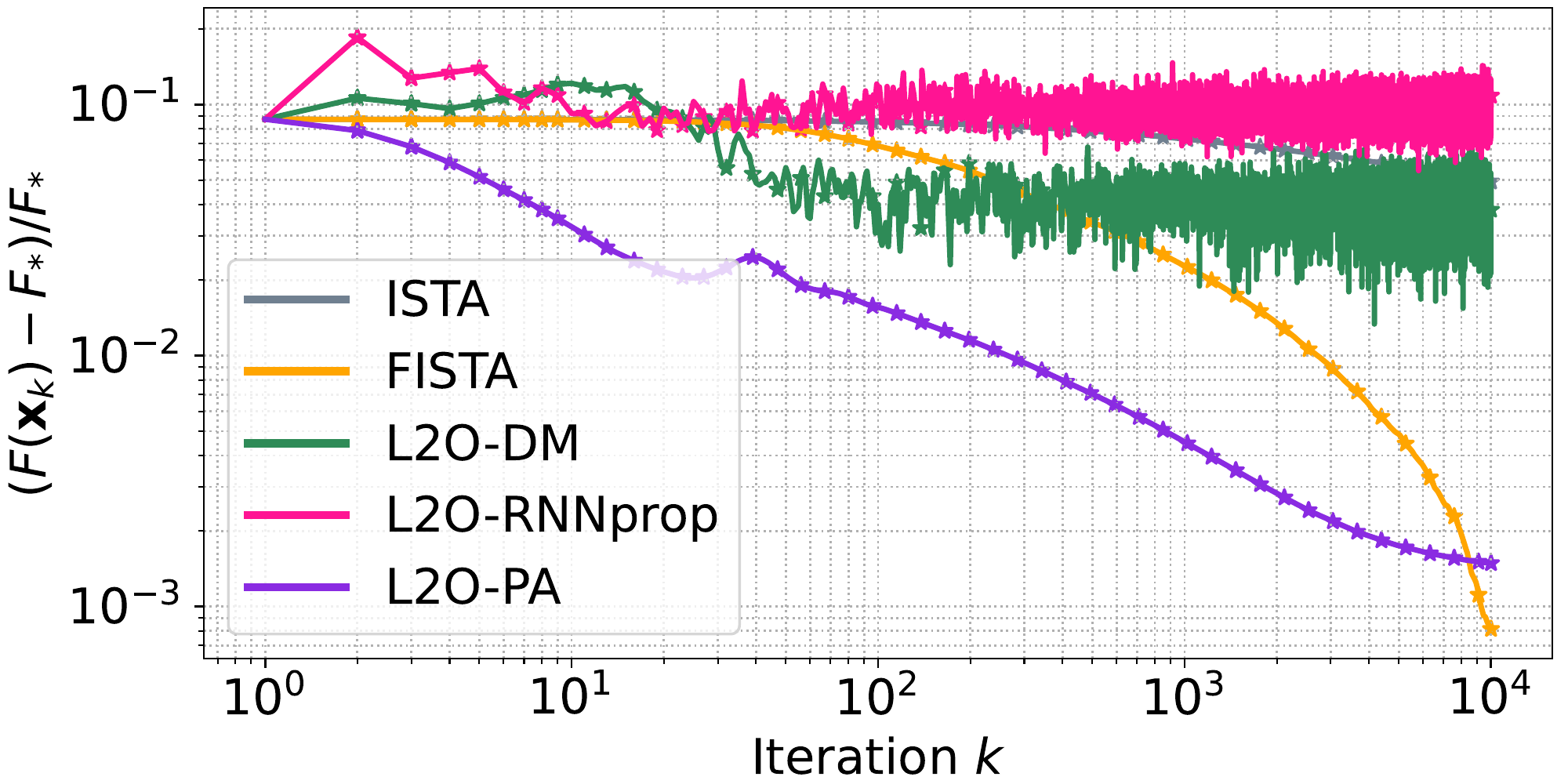}
    \vspace{-1mm}
    \caption{Logistic: Train on synthetic data and test on real data (CIFAR-10).}
    \label{fig:logistic-real-cifar10}
\end{figure}

\paragraph{More-Challenging OOD Experiment.} To further test the generalization performance of our method, we conduct an even more challenging OOD experiment. We directly tested learned optimizers, which were trained on synthetic LASSO problems, on synthetic $\ell_1$-regularized Logistic Regression. This setting renders changes in the objective function and thus the structure of the loss function. The results are shown in Figure~\ref{fig:logistic-full-ood}. We can see that \textbf{L2O-PA-LASSO}, the model that was trained with LASSO problems, is able to converge stably at a faster speed than that of  FISTA and other L2O competitors (except for L2O-PA) on Logistic Regression. It is worth noting that all other L2O competitors are trained directly on Logistic Regression.
\begin{figure}[h]
    \centering
    \includegraphics[width=0.46\linewidth]{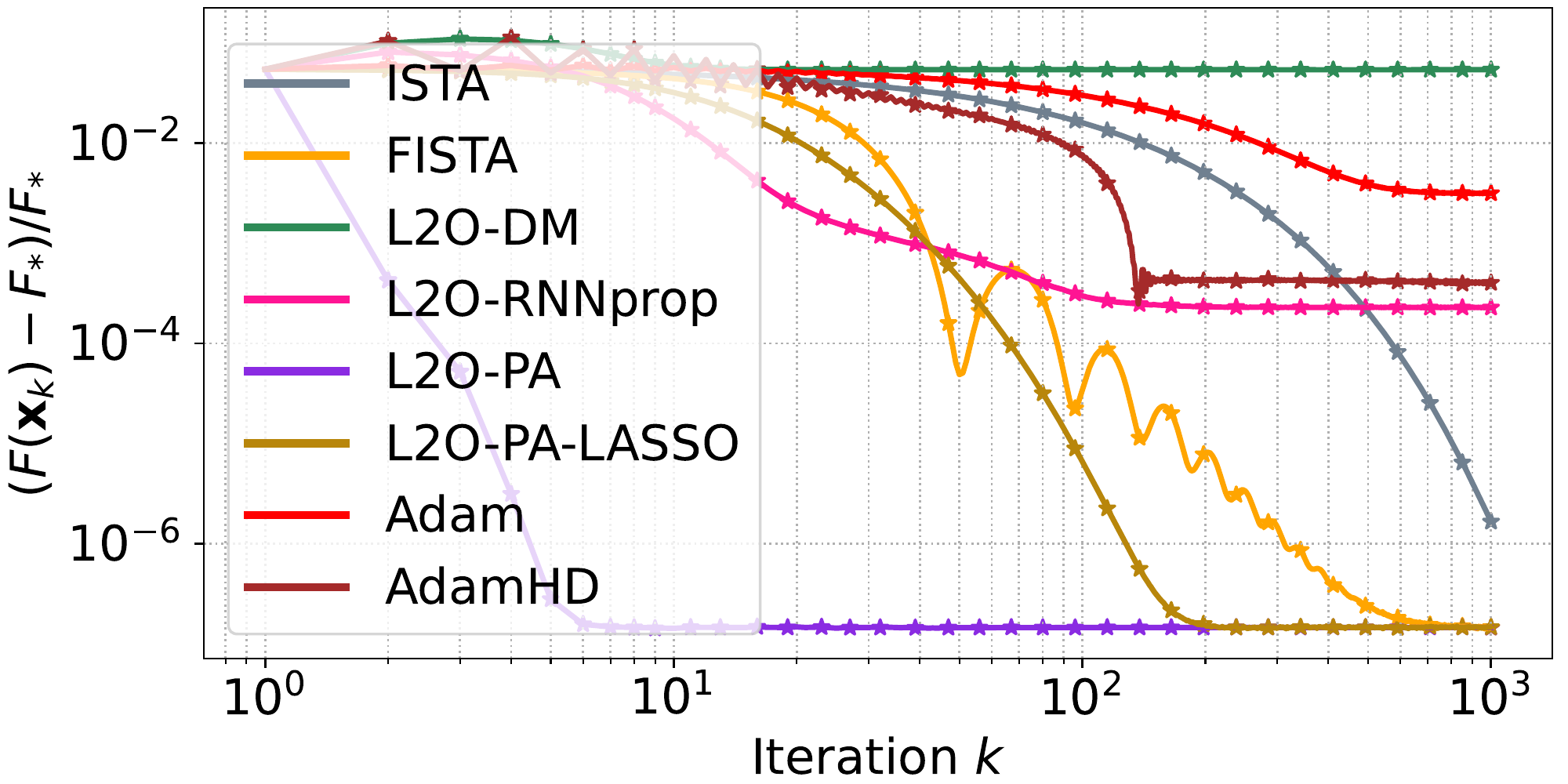}
    \vspace{-1mm}
    \caption{Logistic: Test on synthetic data.}
    \label{fig:logistic-full-ood}
\end{figure}

\paragraph{Platform.} All the experiments are conducted on a workstation equipped with four NVIDIA RTX A6000 GPUs. We used PyTorch 1.12 and CUDA 11.3.

\end{document}